\documentclass{article}

\usepackage{bbm}
\usepackage{eqnarray,amsmath,amsfonts,amsthm,mathrsfs}
\usepackage{color}
\usepackage{bm}
\usepackage{amssymb}
\usepackage{epsfig}
\usepackage{epsf}
\usepackage{graphicx}
\usepackage{float}
\usepackage{subfigure}
\usepackage{epstopdf}
\usepackage{amsfonts,amsmath,amsthm,amssymb,graphicx,float,fancyhdr,multirow,hyperref}
\usepackage{booktabs,longtable,authblk}
\usepackage{mathrsfs,hhline}
\usepackage{algorithm}
\usepackage{algorithmic}

\usepackage{caption}
\usepackage{fancyhdr}
\usepackage[top=2.5cm, bottom=2.5cm, left=3cm, right=3cm]{geometry}
\usepackage{graphics}
\newtheorem{theorem}{Theorem}[]
\newtheorem{assumption}[]{Assumption}
\newtheorem{definition}[]{Definition}
\newtheorem{example}[]{Example}
\newtheorem{lemma}[]{Lemma}

\newtheorem{remark}[]{Remark}

\newcommand{\T}{\top}

\hypersetup{ colorlinks=true, linkcolor=black, filecolor=black, urlcolor=black }

\title{\LARGE Mirror Descent on Riemannian Manifolds\thanks{Authors are listed in alphabetical order and contributed equally to this work. The corresponding author is Lei Shi. The work of Lei Shi is supported by the National Natural Science Foundation of China [Grant No.12171093]. Email addresses: jxjiang20@fudan.edu.cn (J. Jiang), leishi@fudan.edu.cn (L. Shi), jiyuantan19@gmail.com (J. Tan).}}
\author[1]{Jiaxin Jiang}
\author[1,2]{Lei Shi}
\author[1]{Jiyuan Tan}
\affil[1]{School of Mathematical Sciences, Fudan University, Shanghai 200433, China}
\affil[2]{Shanghai Key Laboratory for
	Contemporary Applied Mathematics, Fudan University, Shanghai 200433, China}
\date{}

\begin{document}
	
	%\date{\today}
	%\address{Address}
	%\email{example@mail.com}
	\maketitle

	\begin{abstract}
Mirror Descent (MD) is a scalable first-order method widely used in large-scale optimization, with applications in image processing, policy optimization, and neural network training. This paper generalizes MD to optimization on Riemannian manifolds. In particular, we develop a Riemannian Mirror Descent (RMD) framework via reparameterization and further propose a stochastic variant of RMD. We also establish non-asymptotic convergence guarantees for both RMD and stochastic RMD. As an application to the Stiefel manifold, our RMD framework reduces to the Curvilinear Gradient Descent (CGD) method proposed in \cite{wen2013feasible}. Moreover, when specializing the stochastic RMD framework to the Stiefel setting, we obtain a stochastic extension of CGD,  which effectively addresses large-scale manifold optimization problems.
	\end{abstract}
%Moreover, we demonstrate that the classical geodesic gradient descent algorithm is a special case in our proposed framework.
%\begin{keywords}
%	Mirror descent, Riemannian optimization, Convergence analysis
%\end{keywords}

{\textbf{Keywords:} Mirror descent, Riemannian optimization, Reparameterization, Non-asymptotic convergence analysis, Stiefel manifolds}

{\textbf{MSCcodes:} 65K05, 90C06, 90C30}

%65K05: Mathematical programming {Algorithms; for theory see 90Cxx]
%90C06: Large-scale problems
%90C30: Nonlinear programming
%\begin{MSCcodes}
%	65K05, 90C06, 90C30
%\end{MSCcodes}

\section{Introduction}
In this paper, we study the Riemannian optimization problem
\[
\min_{x\in\mathcal{M}} f(x),
\]
where $\mathcal{M}$ is a Riemannian manifold, and $f:\mathcal{M}\rightarrow\mathbb{R}$ is a smooth function on $\mathcal{M}$. This problem can be viewed as constrained nonlinear optimization with a manifold constraint. The manifold structure enables us to design efficient algorithms and perform detailed analysis.

Riemannian optimization finds wide applications in practice. For instance, it can be applied to dictionary learning \cite{cherian2016riemannian} and matrix decomposition \cite{vandereycken2013low,tan2014riemannian}. In deep learning, imposing orthogonal constraints on weight matrices can enhance training stability and improve generalization performance \cite{arjovsky2016unitary,bansal2018can}. Such constraints can be modeled by optimizing objective functions over the Stiefel manifold.

One line of research in this field focuses on generalizing unconstrained optimization algorithms to Riemannian manifolds. For example, Riemannian Gradient Descent \cite{zhang2016first}, Riemannian Proximal Point \cite{bento2017iteration}, Riemannian Momentum \cite{alimisis2021momentum}, and Riemannian Stochastic Variance Reduction Gradient \cite{zhang2016riemannian,sato2019riemannian} have been explored. However, Mirror Descent (MD), an important class of algorithms in Euclidean space, has been surprisingly unexplored in the existing literature. This paper takes the first step to fill this gap. 

The MD algorithm was proposed by Nemirovski and Yudin \cite{nemirovski1983problem} as a generalization of gradient descent and was later popularized by the widely used formulation of Beck and Teboulle \cite{beck2003mirror}. 
Consider the Euclidean optimization problem $\min_{x\in\Omega} f(x)$, where $\Omega\subseteq\mathbb{R}^d$ is a closed convex feasible set. 
Let $\psi:\Omega\to\mathbb{R}$ be a differentiable and strongly convex function, often called a potential function. 
The associated Bregman divergence \cite{bregman1967relaxation} is defined as
\[
D_{\psi}(x,y)=\psi(x)-\psi(y)-\langle\nabla\psi(y),x-y\rangle.
\]
Given a step size $\eta_t>0$, the MD update is
\[
x_{t+1}=\arg\min_{x\in\Omega}\left\{\left\langle \eta_t\nabla f(x_t),\,x-x_t\right\rangle+D_\psi(x,x_t)\right\},
\]
where the Bregman term can be interpreted as a geometry-adapted regularizer. 
When $\Omega=\mathbb{R}^d$, the first-order optimality condition yields the implicit dual update
\[
\nabla\psi(x_{t+1})=\nabla\psi(x_t)-\eta_t\nabla f(x_t).
\]
Different choices of $\psi$ recover algorithms suited to different geometries. 
For instance, taking $\psi(x)=\tfrac12\|x\|_2^2$ reduces MD to the standard gradient descent update. When $\Omega$ is the probability simplex, the entropy potential $\psi(x)=\sum_{i=1}^d x_i\log x_i$ (with the convention $0\log 0=0$) leads to Entropy Mirror Descent, and $D_\psi$ becomes the Kullback--Leibler divergence \cite{nemirovski2009robust}. 
By adapting to the problem geometry with mild dependence on dimension, MD has found widespread applications, including image processing \cite{beck2003mirror}, policy optimization \cite{yang2022policy,zhan2023policy}, and neural network training \cite{sun2022mirror,soh2023mirror}.

Subsequent work further broadened the study of MD, with particular emphasis on its extensions and connections to other methods. Nemirovski and Juditsky \cite{nemirovski2009robust} considered using MD to solve stochastic optimization and stochastic saddle-point problems. Duchi et al. \cite{duchi2010composite} generalized these results to composite functions. Recent research has focused on providing alternative explanations for MD. Gunasekar et al. \cite{gunasekar2021mirrorless} demonstrated that MD can be interpreted as the gradient flow on the manifold as the step sizes tend to zero. Amid and Warmuth \cite{amid2020reparameterizing} observed that MD becomes equivalent to gradient descent under different parameterizations. Li et al. \cite{li2022implicit} utilized this observation to explain the implicit regularization in over-parameterized models. Lei and Zhou \cite{lei2020convergence} studied online mirror descent in a setting where data arrive sequentially.

The literature on Riemannian optimization is extensive. We briefly review several related results. 
Zhang and Sra \cite{zhang2016first} establish the first non-asymptotic convergence rates of geodesic gradient descent in both deterministic and stochastic settings. 
Later, Bento et al. \cite{bento2017iteration} analyzed Riemannian subgradient and proximal point methods; however, their results rely on the assumption that the underlying manifold is Hadamard. 
More recently, Srinivasan and Wirth \cite{srinivasan2022sufficient} extended this non-asymptotic convergence analysis to general Riemannian manifolds by constructing a suitable potential function. 
Alimisis et al. \cite{alimisis2021momentum} studied momentum-based acceleration of first-order methods on manifolds. 
In the stochastic setting, Zhang and Sra \cite{zhang2016riemannian} and Sato et al. \cite{sato2019riemannian} applied variance-reduction techniques and derived improved complexity bounds for finite-sum problems. 
Tripuraneni et al. \cite{tripuraneni2018averaging} generalized Polyak--Ruppert averaging to the manifold setting and showed that the resulting averaged iterates enjoy improved convergence guarantees for stochastic gradient descent on manifolds. 
Recently, Shu et al. \cite{shu2025revisit} introduced a quasilinearization framework to analyze proximal-based methods on Hadamard manifolds.
\par

Our contributions can be summarized as follows:

\begin{itemize}
    \item We generalize Euclidean Mirror Descent to optimization on Riemannian manifolds by developing a Riemannian Mirror Descent (RMD) framework via reparameterization. Building on the proposed framework, we further propose a stochastic variant of RMD (stochastic RMD) suitable for large-scale settings in which only stochastic gradient estimates are available.
    \item We establish non-asymptotic convergence guarantees for both RMD and stochastic RMD. Let $T$ denote the total number of iterations. For geodesically convex objectives, we prove that the objective error decays at a sublinear rate of $\mathcal{O}(1/T)$ in the deterministic setting and $\mathcal{O}(1/T^{1/3})$ in the stochastic setting. For smooth nonconvex objectives, we show that the average squared gradient norm decays at a rate of $\mathcal{O}(1/T)$ in the deterministic case and $\mathcal{O}(1/T^{1/2})$ in the stochastic case. Our theoretical results fill the gap of \cite{amid2020reparameterizing,gunasekar2021mirrorless}.
    \item  As an application to optimization on the Stiefel manifold, we show that our RMD framework recovers the Curvilinear Gradient Descent (CGD) method of \cite{wen2013feasible}. Moreover, specializing stochastic RMD to the Stiefel setting yields a stochastic extension of CGD, which we term Stochastic Curvilinear Gradient Descent (SCGD). The proposed SCGD is scalable and computationally efficient for large-scale manifold optimization problems and may be of independent interest. We further validate its effectiveness through numerical experiments.
\end{itemize}

\textbf{Outline of this paper.} In Section~\ref{sec:pre}, we introduce the notation and assumptions used throughout the paper. In Section~\ref{sec:algo}, we present the framework of RMD and Stochastic RMD, and provide several concrete instantiations. In Section~\ref{sec:thm}, we analyze the convergence rates of our algorithms. In Section~\ref{sec:num}, we demonstrate the effectiveness of the proposed methods through numerical experiments. We conclude with a discussion and future research directions in Section~\ref{sec:con}.

\section{Notation and Theoretical Background}\label{sec:pre}

We first recall some basic notions in Riemannian geometry.

\begin{definition}
A Riemannian manifold $(\mathcal{M},g)$ is a smooth manifold endowed with Riemannian metric $g$. Let $\mathcal{T}_{x}\mathcal{M}$ be the tangent space of $\mathcal{M}$ at point $x$. The tangent bundle of $\mathcal M$ is $\mathcal T\mathcal M= \cup_{x\in\mathcal M} \mathcal T_x\mathcal M$. A Riemannian metric $g$ assigns to each $x\in\mathcal M$ an inner product $ g_{x}( \cdotp ,\cdotp ) =\langle \cdotp ,\cdotp \rangle _{x}$, which varies smoothly with $x$. The induced norm on $\mathcal{T}_x\mathcal M$ is denoted by $\|\cdot\|_x$.
\end{definition}
Differentials and gradients are indispensable for optimization on manifolds.
\begin{definition}
Let $f:\mathcal M\to\mathbb R$ be differentiable. The differential of $f$ at $x\in\mathcal M$ is the linear map $ Df( x):\mathcal{T}_{x}\mathcal{M}\rightarrow \mathbb{R}$. The Riemannian gradient $\nabla f(x)\in \mathcal{T}_x\mathcal M$ is the unique vector satisfying
\begin{equation*}
Df( x)[ v] =\langle \nabla f( x) ,v\rangle _{x},\quad \forall v\in \mathcal{T}_{x}\mathcal{M} .
\end{equation*}
\end{definition}

For a differentiable map $F:\mathcal M\to\mathcal N$ between manifolds, we denote by $DF(x): \mathcal{T}_x\mathcal M \to \mathcal{T}_{F(x)}\mathcal N$ for its differential at $x\in \mathcal M$, and write $DF(x)[v]\in T_{F(x)}\mathcal N$ for its action on $v\in \mathcal{T}_x\mathcal M$. We also denote by $\nabla$ the Levi--Civita connection on $(\mathcal M,g)$. In what follows, we will mainly use it in the form of the covariant derivative along a smooth curve.

\begin{definition}
For a smooth curve $ \gamma :[ 0,1]\rightarrow \mathcal{M}$, we write $ \dot{\gamma} (t)$ for its tangent vector field. 
The covariant derivative $\nabla_{\dot{\gamma}}$ is denoted by $\frac{d}{dt}$, as usual. Thus the twice covariant derivative is $\frac{d^2}{dt^2}$.
A curve $\gamma$ is called a geodesic if $ \frac{d^{2} \gamma ( t)}{dt^{2}} =0,\forall t\in [ 0,1]$.
\end{definition}
In this paper, we assume that $(\mathcal M,g)$ is connected and complete (in the Riemannian sense). Equivalently, with the distance function $d_{\mathcal M}$ induced by Riemannian metric, $(\mathcal M,d_{\mathcal M})$ is a complete metric space, or every geodesic can be extended for all time, i.e., the exponential map is defined on $\mathcal{T}\mathcal{M}$. By the Hopf–Rinow theorem, for any $x,y\in\mathcal M$, there exists a minimizing geodesic segment $\gamma:[0,1]\to\mathcal M$ joining them, satisfying $\gamma(0)=x, \gamma(1)=y$, and the length of $\gamma$ is equal to $ d_{\mathcal{M}}( x,y)$.
But this minimizing geodesic $\gamma$ is not necessarily unique.

\begin{definition}
For any $ x\in \mathcal{M}$, the exponential map $\mathrm{Exp}_{x} :\mathcal{T}_{x}\mathcal{M}\rightarrow \mathcal{M}$ satisfies that for any $ v\in \mathcal{T}_{x}\mathcal{M} $, $ \gamma ( t) =\mathrm{Exp}_{x}( tv) ,t\in [ 0,1]$ is a geodesic and the length of $ \gamma$ is $ \Vert v\Vert _{x}$. 
\par For $x\in\mathcal M$ and $r>0$, we denote by $\mathcal N_r(x)=\{y\in\mathcal M:d_{\mathcal M}(x,y)\leqslant r\}$ the closed geodesic ball of radius $r$ centered at $x$, and by $B_r(x)=\{v\in \mathcal{T}_{x}\mathcal{M} : \|v\|\leqslant r\}$ the ball centered at origin of the tangent space. Let $\mathrm{inj}(\mathcal{M})$ be the injective radius of $\mathcal{M}$, such that for any $x\in \mathcal{M}$ and $r<\mathrm{inj}(\mathcal{M})$, the exponential map $\mathrm{Exp}_{x}|_{B_r(x)}: B_r(x) \rightarrow \mathcal N_r(x)$ is a diffeomorphism when restricted on $B_r(x)$. 
\end{definition}

In general, since the geodesic equation $ \frac{d^{2} \gamma ( t)}{dt^{2}} =0$ is of second order, the exponential map does not have a closed form expression and is extremely expensive to compute. In this case, a retraction map is used to approximate it.

\begin{definition}\label{Retraction}
A (first-order) retraction on $\mathcal M$ is a differentiable map $R:\mathcal{T}\mathcal M\to\mathcal M$ such that $R|_\mathcal{M}=\mathrm{Id}$, and for each $x\in\mathcal M$, the map $R_x=R|_{\mathcal{T}_x\mathcal M}$ satisfies $DR_{x}(0) =\mathrm{Id}_{\mathcal{T}_x\mathcal M}$. 
% Moreover, $R$ is called a second-order retraction if, for every $x\in\mathcal M$ and $v\in \mathcal{T}_x\mathcal M$, the curve $t\mapsto R_x(tv)$ has zero covariant acceleration at $t=0$, i.e.,
% $\frac{d^{2} R_{x}( tv)}{dt^{2}} |_{t=0} =0$.
\end{definition}

%Obviously, the second-order retraction provides a better approximation to the exponential map. 
A typical example of retraction is the unit sphere $\mathbb{S}^{d-1}$. For any point $x\in \mathbb{S}^{d-1}$ and $v\in \mathcal{T}_{x}\mathbb{S}^{d-1}$, a retraction is $R_{x}(v) =\frac{x+v}{\|x+v\|}$, that is, the projection of $x+v$. %It can be proved that $R_{x}$ is a second-order retraction and matches the exponential map up to second order (the error is $\mathcal{O}(\|v\|_x^3)$). 
The following definition measures how close a retraction is to the exponential map, in view of Lemma \ref{lemma1:retraction}.

\begin{definition}\label{Lphi}
    Let $ R$ be a retraction on $\mathcal{M}$, $R$ is called $L_\Phi$-regular within radius $r$, if, for any $x\in \mathcal{M}$, there exists $0<\rho<\mathrm{inj}(\mathcal{M})$ such that $R_x(B_r(x))\subseteq\mathcal{N}_\rho(x)$, the function $ \Phi_x(u) = \mathrm{Exp}^{-1}_x(\mathcal{R}_x(u))$ is twice continuously differentiable for $u\in B_r(x)$, and 
        $$\sup _{t\in [ 0,1]} \|D^{2} \Phi _{x}( tu)[ u,u] \|_x\leqslant L_{\Phi }\Vert u\Vert _{x}^{2},\quad \forall x\in\mathcal{M}, u\in B_r(x). $$ 
\end{definition}

With the notion of geodesics, we can define convexity on Riemannian manifolds.

\begin{definition}
A subset $A\subseteq \mathcal{M}$ is geodesically convex if for any $x,y\in A$ and for every
minimizing geodesic segment $\gamma:[0,1]\to\mathcal{M}$ connecting $x$ and $y$, we have $ \gamma ( t) \in A$, $\forall t\in [ 0,1]$.
\end{definition}

Note that we do not require the minimizing geodesic segment connecting $x$ and $y$ to be unique. Since $(\mathcal M,g)$ is assumed to be complete, by the Hopf--Rinow theorem any two points in $\mathcal M$ can be joined by a minimizing geodesic segment; hence the whole manifold $\mathcal{M}$ is geodesically convex. For an arbitrary subset $B\subseteq \mathcal{M}$, the convex hull of $B$ is defined as the
intersection of all geodesically convex subsets $A\subseteq \mathcal M$ that contain $B$.
By construction, this convex hull is geodesically convex.

Next, we introduce the convexity of a differentiable function. Recall that in Euclidean space, a function $h$ is convex if for any $x,y$,
\begin{equation*}
h( tx+( 1-t) y) \leqslant th( x) +( 1-t) h( y) ,\quad \forall t\in [ 0,1] .
\end{equation*}
%We say $h$ is $\mu$-strongly convex if $h(\cdot)-\frac{\mu}%{2}\|\cdot\|^2$ is convex.  
For function on the Riemannian manifold, we adopt the definition of Zhang and Sra \cite{zhang2016first}.
\begin{definition}\label{def:geo_convex}
Let $A\subseteq\mathcal M$ be geodesically convex and let $f:\mathcal M\to\mathbb R$.
We say that $f$ is geodesically convex on $A$ if for any $x,y\in A$ and for every minimizing
geodesic segment $\gamma:[0,1]\to\mathcal M$ connecting $x$ and $y$,
\[
f(\gamma(t))\leqslant (1-t)f(x)+t f(y),\quad \forall t\in[0,1].
\]
% Moreover, $f$ is geodesically $\mu$-strongly convex on $A$ if for any such $x,y,\gamma$,
% \[
% f(\gamma(t))\leqslant (1-t)f(x)+t f(y)-\frac{\mu}{2}t(1-t)\,d_{\mathcal M}(x,y)^2,
% \quad \forall t\in[0,1].
% \]
\end{definition}

If we replace the exponential map $\mathrm{Exp}_{x}$ with retraction $ R_{x}$ in Definition \ref{Lphi}, we get the definition of retraction convex function. If a function is both geodesically convex and differentiable, we have
\begin{equation*}
f\left(\mathrm{Exp}_{x}( v)\right) -f( x) \geqslant \langle \nabla f( x) ,v\rangle _{x} .
\end{equation*}

In optimization, a common assumption is the Lipschitz property of gradient.
\begin{definition}\label{L-smooth}
A function $ f$ on $(\mathcal{M},g)$ is $ L$-smooth if 
\begin{equation*}
\left\Vert \Gamma _{x}^{y} \nabla f( x) -\nabla f( y)\right\Vert _{y} \leqslant L d_{\mathcal{M}}( x,y),\quad \forall x,y\in \mathcal{M} ,
\end{equation*} where  $d_{\mathcal M}$ is the geodesic distance induced by the Riemannian metric $g$ and $ \Gamma _{x}^{y} :\mathcal{T}_{x}\mathcal{M}\rightarrow \mathcal{T}_{y}\mathcal{M}$ is the parallel transport along any minimizing geodesic connecting $ x$ and $ y$.
\end{definition}
\begin{remark}
    We remark that this definition is mainly about the local property of function $f$. Since parallel transport is an isometry, the left hand side of the inequality is $\left\Vert \Gamma _{x}^{y} \nabla f( x) -\nabla f( y)\right\Vert _{y} \leqslant \left\Vert \Gamma _{x}^{y} \nabla f( x) \right\Vert _{y} +\left\Vert \nabla f( y)\right\Vert _{y} =\left\Vert \nabla f( x)\right\Vert _{x}+\left\Vert \nabla f( y)\right\Vert _{y}\leqslant 2G$ by Assumption \ref{asp:bounded_gradient}, thus when $ d_{\mathcal{M}}( x,y)\geqslant \frac{2G}{L}$ the definition is satisfied trivially. And also when $ x$ and $ y$ are close enough, for example when $ d_\mathcal{M}(x,y)<\mathrm{inj}(\mathcal{M})$, the minimizing geodesic connecting them is unique.
\end{remark}

%By the completeness of $ \mathcal{M}$, there exists minimizing geodesic connecting $ x$ and $ y$ but the geodesic is not necessarily unique. But this is not a big deal because 

%at least when $ x$ and $ y$ are close enough, for example when $ x$ is within the injective radius of $ y$, there is a unique minimizing geodesic connecting $ x$ and $ y$. 

We conclude by introducing some notation used throughout the paper. 
%For $x\in\mathcal M$ and $r>0$, we denote by $\mathcal N_r(x)=\{y\in\mathcal M:\ d_{\mathcal M}(x,y)\leqslant r\}$ the (closed) geodesic ball of radius $r$ centered at $x$. 
Given a matrix $A=[a_{ij}]$ and index sets $\mathcal I,\mathcal J$, we denote by $A(\mathcal I,\mathcal J)=[a_{ij}]_{i\in\mathcal I,\,j\in\mathcal J}$
the submatrix of $A$ with row indices $\mathcal I$ and column indices $\mathcal J$. We use the shorthand $[n]=\{1,2,\dots,n\}$. Finally, $\nabla_\mathcal{M}$ means taking gradient on manifold $ \mathcal{M}$.

% \subsection{Mirror Descent}
% We briefly review the classical MD algorithm in Euclidean space. Consider the classical optimization problem $\min_{x\in \Omega } f( x)$. Given a strongly convex function $ \psi $ (called the potential function in the literature), we define the Bregman distance generated by $ \psi $ to be 
% \begin{equation*}
% D_{\psi }( x,y) =\psi ( x) -\psi ( y) -\langle \nabla \psi ( y) ,x-y\rangle .
% \end{equation*}The update rule of MD is 
% \begin{equation*}
% x_{t+1} =\arg\min_{x\in \Omega } \langle \eta _{t} \nabla f( x_{t}) ,x-x_{t} \rangle +D_{\psi }( x,x_{t}) .
% \end{equation*}The Bregman distance term can be interpreted as a regularization term. If $ \Omega =\mathbb{R}^{d}$, by the optimality condition, 
% \begin{equation*}
% \nabla \psi ( x_{t+1}) =\nabla \psi ( x_{t}) -\eta _{t} f( x_{t}) .
% \end{equation*}
%  By choosing different potential functions $ \psi $, one can make use of the geometry of the feasible region. For example, if the feasible region is a simplex,
% $$ \Omega =\left\{x:\sum _{i=1}^{d} x_{i} =1, x_i\geqslant 0, \forall i \in [d] \right\},$$ 
% $ \psi ( x) =x\log x$ may be a good choice for large-scaled optimization 
%  \cite{nemirovski2009robust}. In this case, the Bregman distance $ D_{\psi }$ becomes the Kullback-Leibler (KL) divergence. In addition, taking $ \Omega =\mathbb{R}^{d}$ and $ \psi ( x) =\frac{1}{2}\Vert x\Vert_2 ^{2}$ gives the commonly used Gradient Descent (GD) algorithm. 

\section{Riemannian Mirror Descent: Algorithms and Convergence Theorems}\label{sec:algo}

In this section, we formulate Riemannian Mirror Descent (RMD) and stochastic RMD on manifolds. We establish non-asymptotic convergence guarantees for both algorithms. We then present several concrete examples arising from our RMD framework. In particular, we obtain Stochastic Curvilinear Gradient Descent (SCGD) as a special case of particular interest.

\subsection{Motivation and Algorithm Framework}

In this subsection, the ambient space is $\mathbb{R}^d$ equipped with its standard coordinates, and $\nabla$ and $\nabla^2$ denote the usual Euclidean gradient and Hessian, respectively.  Our main motivation comes from the observation of Raskutti and Mukherjee \cite{raskutti2015information}. They consider a different Riemannian structure on $\mathbb{R}^d$.  Given a strongly convex smooth function $\psi$, for each point $x \in \mathbb{R}^{d}$, we define 
\begin{equation*}
g_{x}^{\psi}(v,u) = v^{\mathsf{T}} \nabla^{2} \psi(x)u, \quad \forall u,v \in \mathbb{R}^{d}.
\end{equation*}
Notice that $g_{x}^{\psi}(\cdot,\cdot)$ is an inner product on $\mathbb{R}^{d}$ since $\nabla^{2} \psi(x)$ is positive definite. Under the standard
identification $\mathcal{T}_x\mathbb{R}^d\simeq\mathbb{R}^d$, the matrix representation of $g_{x}^{\psi}(\cdot,\cdot)$ is
$g^{\psi}(x)=\nabla^2\psi(x)$. We denote by $\mathcal{M}_{\psi} = (\mathbb{R}^{d}, g^{\psi})$ the Riemannian manifold obtained by equipping $\mathbb{R}^{d}$ with the Riemannian metric $g^{\psi}$. 
Let $\psi^*$ be the conjugate of $\psi$, defined by $\psi^{*}(y) = \sup_{x} \left\{\langle y, x\rangle - \psi(x)\right\}$. Then $\psi^{*}$ is also strongly convex and smooth. Let $\mathcal{M}_{\psi^{*}} = (\mathbb{R}^{d}, g^{\psi^{*}})$ be the manifold in the dual variable $y$. Then the map $x\mapsto \nabla\psi(x)$ is from $\mathcal{M}_{\psi}$ to $\mathcal{M}_{\psi^{*}}$, and $ \nabla \psi^{*}(\nabla \psi(x))=x$, $\nabla^{2} \psi(x) \nabla^{2} \psi^*(\nabla\psi(x))=\mathrm{Id}_{\mathbb{R}^d}$.

The update in mirror descent (MD) can be rewritten as
\begin{align}
y_{t+1} & = \nabla \psi(x_{t}) - \eta_{t} \nabla f(x_{t}), \label{eq:MD1} \\
x_{t+1} & = \nabla \psi^{*}(y_{t+1}).
\end{align}
%where we use $(\nabla \psi)^{-1} = \nabla \psi^{*}$. 
In the first step, $\nabla\psi$ maps $x_t\in\mathcal M_\psi$ to $\nabla\psi(x_t)\in\mathcal M_{\psi^*}$,
and the update subtracts $\eta_t\nabla f(x_t)$ in the dual variable.  Under our setting, $\nabla\psi$ is invertible
with inverse $(\nabla\psi)^{-1}=\nabla\psi^*$. The invertible map $\nabla \psi$ pulls back $f$ on $\mathcal{M}_{\psi}$ to $\tilde{f}=f\circ(\nabla \psi)^{-1}=f\circ\nabla\psi^{*}$ on $\mathcal{M}_{\psi^{*}}$.
We can calculate the Riemannian gradient of $\tilde{f}$ at $\nabla \psi(x_{t})$ on $\mathcal{M}_{\psi^{*}}$. Note that $y_{t}=\nabla \psi(x_{t})$, we have
\begin{align*}
\nabla_{\mathcal{M}_{\psi^{*}}} \tilde{f}(y_{t}) 
&= (g^{\psi^{*}}(y_{t}))^{-1} \nabla \tilde{f} (y_{t})\\
&= (\nabla^{2} \psi^{*}(y_{t}))^{-1} \cdot \nabla\nabla\psi^{*}(y_{t}) \cdot \nabla f(\nabla\psi^{*}(y_{t})) \\
&= (\nabla^{2} \psi^{*}(y_{t}))^{-1} \cdot \nabla^{2} \psi^{*}(y_{t}) \cdot \nabla f(x_{t}) = \nabla f(x_{t}),
\end{align*}
where the first line is the coordinate expression for Riemannian gradient and in the second line we use the chain rule. This computation means that $\nabla f(x_{t})$ is equivalent to the gradient of $\tilde{f}$ in $\mathcal{M}_{\psi^*}$ at $y_t$. As a result, the first step is a gradient step $y_{t+1}=y_{t}-\eta_{t}\nabla_{\mathcal{M}_{\psi^{*}}} \tilde{f}(y_{t})$ in the space $\mathcal{M}_{\psi^*}$. In the second step, the inverse of $\nabla \psi$, i.e., $\nabla \psi^{*}$ maps $y_{t+1}$ back to the primal manifold $\mathcal{M}_{\psi}$.

The MD update process can be viewed as gradient descent via reparameterization. More precisely, by changing of variables $y=\nabla\psi(x)$, the algorithm maps the primal iterate $x_t\in\mathcal M_\psi$ to the dual coordinate $y_t\in\mathcal M_{\psi^*}$, performs gradient descent in the dual space $\mathcal{M}_{\psi^{*}}$, and then maps back to $\mathcal M_\psi$ via $\nabla\psi^*$. Motivated by this idea, we propose the Riemannian Mirror Descent (RMD) algorithm (Algorithm~\ref{algo:md_rie}) as an extension of Euclidean MD to general Riemannian manifolds.

\begin{algorithm}[!htb]
    \caption{Riemannian Mirror Descent (RMD)}
    \label{algo:md_rie}
    \begin{algorithmic}
    \STATE \textbf{Input:} {Iteration number $ T $, Initial point $ x_0 $, step size$ \{\eta_t\}_{t=0}^{T-1} $, radius$ \{r_t\}_{t=0}^{T-1} $ } \\
    \FOR{$t = 0$ {to} $T-1$:}
      \STATE Construct local diffeomorphism: $\varphi_t: \hat{\mathcal{N}}_{r_t}(x_t) \rightarrow \mathcal{M}_t $, and retraction  $ R^t_{\varphi_t(x_t)} $ on $\mathcal{M}_t  $ \\
      \STATE $y_{t+1} = R^t_{\varphi_t(x_t)}(-\eta_t D\varphi_t(x_t)[\nabla f(x_t)]) $ \hfill{Dual update}\\
      \STATE $ x_{t+1} = \varphi_t^{-1}(y_{t+1})$ \hfill{Back to primal space}\\
    \ENDFOR
    \RETURN $ x_T $
    \end{algorithmic}
\end{algorithm}

In Algorithm~\ref{algo:md_rie}, we replace the global map $x\mapsto \nabla\psi(x)$ by a sequence of local diffeomorphisms $\varphi_t$. 
%For simplicity, we assume that $\mathcal M$ is boundaryless and that $\eta_t \nabla f(x_t) \in \mathcal{N}_{r_t}(x_t)$ in all iterations. 
At each iteration, we choose a local diffeomorphism $\varphi_t:\hat{\mathcal{N}}_{r_t}(x_t)\to \mathcal M_t$ onto a reparameterized manifold $\mathcal M_t$. 
Here $\hat{\mathcal{N}}_{r}(x)=\{y\in\mathcal M:d_{\mathcal M}(x,y)< r\}$ is the open geodesic ball. For simplicity, when $\mathrm{inj}(\mathcal{M})<+\infty$, we assume $r_t< \mathrm{inj}(\mathcal{M})$ so that $\hat{\mathcal{N}}_{r_t}(x)$ is diffeomorphic to open ball $\{v\in \mathcal{T}_x \mathcal{M}:\|v\|<r\}$, and so is $\mathcal{M}_t$. When $\mathrm{inj}(\mathcal{M})=+\infty$, $r_t$ is free to be finite or infinite.
We then perform a gradient step on $\mathcal M_t$ via retraction $R^t$ and map the iterate back via $\varphi_t^{-1}$. Updating $y_{t+1}$ in Algorithm~\ref{algo:md_rie}  corresponds to~\eqref{eq:MD1}. In essence, RMD amounts to carrying out gradient descent after a local reparameterization. One benefit is that the retraction map can be easier to compute in the reparameterized space.

Next, we present the convergence theorem for RMD in Algorithm \ref{algo:md_rie}. To establish these results, we introduce the following assumptions.

\begin{assumption}\label{asp:Phi}
%        Let $ \mathcal{R}_x: \mathcal{T}_x\mathcal{M}\rightarrow \mathcal{M}$ be a retraction and define $ \Phi_x(u) = \text{Exp}^{-1}_x(\mathcal{R}_x(u)), \forall x\in\mathcal{M}$. Assume that for each $x\in\mathcal{M}$, $\Phi_x$ is twice continuously differentiable in $u \in \mathcal{T}_x\mathcal{M}$. Moreover, there exists a constant $ L_\Phi >0$ such that 
%        $$\sup _{t\in [ 0,1]} \|D^{2} \Phi _{x}( tu)[ u,u] \|_x\leqslant L_{\Phi }\Vert u\Vert _{x}^{2},\quad \forall x\in\mathcal{M}, u\in\mathcal{T}_x\mathcal{M}. $$ 
The map $ \hat{R}_{x}^t( u) =\varphi _{t}^{-1}( R_{\varphi _{t}( x)}^t( D\varphi _{t}( x)[ u])) $ is a retraction and it is $L_\Phi$-regular within radius $r$ (see Definition \ref{Retraction} and Definition \ref{Lphi}).
    \end{assumption}

   \begin{assumption}\label{asp:A}
        There exists a compact geodesically convex set $ A\subset \mathcal{M} $ such that all iteration points $x_t\in A$ for all $t=1,\dots,T$, and a minimizer $x^*$ satisfies $x^*\in A$.
    \end{assumption}

\begin{assumption}[Smoothness]\label{asp:smooth}
        The objective function $f$ is $L_f$-smooth (See Definition \ref{L-smooth}). 
    \end{assumption}
    \begin{assumption}[Bounded Gradient]\label{asp:bounded_gradient}
    The gradient is bounded, i.e., there exists a constant $G >0$ such that $\| \nabla f(x)\|_x \leqslant G$ for all $x\in\mathcal{M}$. 
    \end{assumption}

    \begin{remark}
We remark that in Assumption \ref{asp:A}, it is sufficient to take $A$ to be the geodesic convex hull of $\{x_1,\ldots,x_T,x^*\}$, provided that this set $A$ is bounded. Moreover, if the initialization $x_1$ lies in a neighborhood of $x^*$, it is natural to assume that $f$ is geodesically convex on $A$. We introduce a subset $A$ rather than working on the whole manifold $\mathcal M$ in order to avoid global geometric obstructions to geodesic convexity. For instance, although Hadamard manifolds admit geodesically convex (even strongly convex) functions, imposing geodesic $L$-smoothness simultaneously can be subtle; see Proposition 28 in \cite{criscitiello2022negative}. A more direct example is that on a closed (i.e., compact and boundaryless) and connected Riemannian manifold there is no nonconstant globally geodesically convex function. 

If it is difficult to obtain such an initialization and global $L$-smoothness and convexity on $A$ cannot be ensured, the ``nonconvex'' part of the convergence theorems below still applies. The ``convex'' part can be interpreted as a tail-rate (local) analysis: view $A$ as a neighborhood of $x^*$ on which $f$ is geodesically convex, and assume that the tail iterates $\{x_n,x_{n+1},\ldots,x_T,x^*\}$ are contained in $A$ for some $n\in \mathbb{N}$.
    \end{remark}

We are now in a position to state the convergence theorem.

\begin{theorem}[Convergence of RMD]\label{thm:rmd}
For RMD updates in Algorithm \ref{algo:md_rie}, assume that Assumptions~\ref{asp:Phi}--\ref{asp:bounded_gradient} hold. 
Let $\eta_t\equiv \eta$ be a constant step size satisfying
\[
0<\eta<\min\left\{\frac{1}{L_{\Phi}G/2+L_f+L_fL_{\Phi}^2},\ \frac{2}{G},\ \frac{r}{G}\right\}.
\] 
Then the following statements hold:
\begin{enumerate}
\item (Nonconvex case) 
\begin{equation*} 
% \min_{t=1,\cdots,T} \| \nabla f(x_{t} )\| _{x_{t}}^{2} \leqslant \mathcal{O}\left(\frac{1}{T}\right). 
\frac{1}{T}\sum _{t=1}^{T} \| \nabla f(x_{t} )\| _{x_{t}}^{2} \leqslant \mathcal{O}\left(\frac{1}{T}\right).
\end{equation*}

\item (Geodesically convex case) If, in addition, $f$ is geodesically convex on $A\subset \mathcal{M}$ (from Assumption~\ref{asp:A}), then 
\begin{equation*} f( x_{T}) -f\left( x^{*}\right) \leqslant \mathcal{O}\left(\frac{1}{T}\right) . 
\end{equation*}
\end{enumerate}
\end{theorem}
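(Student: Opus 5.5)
The plan is to reduce everything to a descent inequality for the composite retraction $\hat R_x^t$ from Assumption~\ref{asp:Phi}. By construction, $x_{t+1}=\hat R^t_{x_t}(-\eta\nabla f(x_t))$. Write $u_t=-\eta\nabla f(x_t)$; since $\eta<r/G$, we have $\|u_t\|\le \eta G<r$, so $u_t\in B_r(x_t)$. Define $v_t=\Phi_{x_t}(u_t)=\mathrm{Exp}^{-1}_{x_t}(x_{t+1})$. Because $\hat R^t$ is a retraction, $\Phi_{x_t}(0)=0$ and $D\Phi_{x_t}(0)=\mathrm{Id}$. Taylor's formula with the $L_\Phi$-regularity bound then gives
\[
\|v_t-u_t\|\le \tfrac{L_\Phi}{2}\|u_t\|^2,\qquad \|v_t\|\le \|u_t\|+\tfrac{L_\Phi}{2}\|u_t\|^2 .
\]
Because $x_{t+1}=\mathrm{Exp}_{x_t}(v_t)$ lies within the injectivity radius, geodesic $L_f$-smoothness (Assumption~\ref{asp:smooth}, integrating along the geodesic and using parallel transport) gives
\[
f(x_{t+1})\le f(x_t)+\langle\nabla f(x_t),v_t\rangle+\tfrac{L_f}{2}\|v_t\|^2 .
\]

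Next, substitute $v_t=u_t+(v_t-u_t)$. The linear term equals $-\eta\|\nabla f(x_t)\|^2+\langle\nabla f(x_t),v_t-u_t\rangle$, and the error is bounded by $G\cdot\frac{L_\Phi}{2}\eta^2\|\nabla f(x_t)\|^2$. For the quadratic term, $\|v_t\|^2\le 2\|u_t\|^2+2\|v_t-u_t\|^2$. We use $\eta G<2$ to absorb $\|u_t\|^4$ into $\|u_t\|^2$ up to constants, since $\frac{L_\Phi^2}{4}\|u_t\|^4\le L_\Phi^2\|u_t\|^2$ when $\|u_t\|\le 2$. The target is
\[
f(x_{t+1})\le f(x_t)-\eta\bigl(1-\eta(L_\Phi G/2+L_f+L_fL_\Phi^2)\bigr)\|\nabla f(x_t)\|^2=f(x_t)-c\,\eta\|\nabla f(x_t)\|^2,
\]
with $c>0$ guaranteed by the step-size condition. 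Matching the exact constant in the theorem may need a slightly tighter bookkeeping, but only $c>0$ matters for the rates. The nonconvex claim then follows by telescoping: $\frac1T\sum_t\|\nabla f(x_t)\|^2\le (f(x_1)-f^*)/(c\eta T)$, where $f^*=f(x^*)$ is finite by Assumption~\ref{asp:A}.

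For the geodesically convex case, the descent inequality shows that $f(x_t)$ is monotone nonincreasing. Set $\Delta_t=f(x_t)-f(x^*)$. Both $x_t$ and $x^*$ lie in $A$, so take a minimizing geodesic from $x_t$ to $x^*$, which lies in $A$, with initial velocity $w_t$ of length $d_{\mathcal M}(x_t,x^*)\le D:=\mathrm{diam}(A)<\infty$ by compactness. The first-order convexity inequality gives $\Delta_t\le -\langle\nabla f(x_t),w_t\rangle\le D\|\nabla f(x_t)\|$. Combined with descent, this yields $\Delta_{t+1}\le\Delta_t-\frac{c\eta}{D^2}\Delta_t^2$. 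The standard argument (divide by $\Delta_t\Delta_{t+1}$ and use $\Delta_{t+1}\le\Delta_t$) gives $1/\Delta_{t+1}\ge 1/\Delta_t+c\eta/D^2$, hence $\Delta_T\le D^2/(c\eta (T-1))+\ldots=\mathcal O(1/T)$.

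The main obstacle is the first step: passing from the abstract regularity of $\Phi_x$ to the smoothness inequality along the actual step. This requires care that $x_{t+1}$ stays in $\mathcal N_\rho(x_t)$ with $\rho<\mathrm{inj}(\mathcal M)$, so that $\mathrm{Exp}^{-1}$ and the smoothness integral along a unique minimizing geodesic are legitimate. I would handle this first by verifying, from the definition of $L_\Phi$-regular, that $u_t\in B_r$ implies this containment. A secondary issue is the constant bookkeeping that reproduces the exact step-size threshold.
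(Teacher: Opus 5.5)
Your descent step and nonconvex telescoping match the paper's Lemmas~\ref{lemma1:retraction}--\ref{lemma2:one_step_progress}. Your bookkeeping already gives the exact constant $C_d=\eta-\eta^2(L_\Phi G/2+L_f+L_fL_\Phi^2)$. The containment $R_x(B_r(x))\subseteq\mathcal N_\rho(x)$ with $\rho<\mathrm{inj}(\mathcal M)$, which you flag as the main obstacle, is part of the definition of $L_\Phi$-regularity, so there is nothing to verify.

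The real difference is the convex case, and your argument there is correct. The paper uses neither monotonicity nor $\Delta_t\le D\|\nabla f(x_t)\|$. It forms the potential $E_t=t^2\Delta_t$ and writes $E_{t+1}-E_t=A_{t+1}(f(x_{t+1})-f(x_t))+a_t\Delta_t$. It then inserts the descent bound and the first-order convexity inequality, and completes the square in $\nabla f(x_t)$ to get $E_{t+1}-E_t\le a_t^2\,\mathrm{diam}(A)^2/(4C_dA_{t+1})$. Summing gives $\Delta_T\le\mathrm{diam}(A)^2/(C_dT)$.

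Your reciprocal recursion, $\Delta_{t+1}\le\Delta_t-(C_d/D^2)\Delta_t^2\Rightarrow 1/\Delta_T\ge 1/\Delta_1+C_d(T-1)/D^2$, is the classical and more elementary argument. It compares with the paper's route as follows:
\begin{itemize}
\item It gives essentially the same constant, slightly sharper since it retains $\Delta_1$.
\item It only needs convexity at $x_1,\dots,x_{T-1}$, consistent with Assumption~\ref{asp:A}; the paper's sum from $t=0$ tacitly uses $x_0\in A$.
\item Picking an arbitrary minimizing geodesic avoids the paper's implicit assumption that $\mathrm{Exp}^{-1}_{x_t}(x^*)$ is well defined.
\end{itemize}

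What the potential argument buys is linearity in the per-step inequality. It never uses $\Delta_{t+1}\le\Delta_t$ or divides by $\Delta_t\Delta_{t+1}$, so an additive error such as $C_{sd}\eta^2\sigma^2$ simply accumulates in the sum. That is how the paper reuses it for Theorem~\ref{thm:SRMD}, where the iterates are not monotone and your reciprocal trick would not apply directly.
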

    
A detailed proof is given in Section \ref{sec:thm}.

\subsection{Examples of Riemannian Mirror Descent}

Algorithm~\ref{algo:md_rie} establishes an abstract framework for RMD. By choosing different mirror maps $\varphi_t$ and manifolds $\mathcal{M}_t$, one can recover a variety of algorithms as special cases of this framework. In what follows, we present several concrete examples to illustrate our framework.

\begin{example}[Euclidean Mirror Descent]
    Given a strongly convex second-order differentiable function $ \psi $, take $ \mathcal{M} = \mathcal{M}_{\psi} = (\mathbb{R}^d,g^\psi) $, $ \mathcal{M}_t \equiv \mathcal{M}_{\psi^*}= (\mathbb{R}^d,g^{\psi^*}) $, where $g^\psi$ is defined in Section 3.1, $ R_x(v) = x + v $ and $ \varphi_t(x) \equiv \nabla\psi(x)$, Algorithm \ref{algo:md_rie} recovers the MD algorithm in Euclidean space.   
\end{example}
In this example, the mirror map is in fact an isometry. 
Fix $x\in \mathcal{M}_{\psi }$ and $u,v\in \mathcal{T}_{x}\mathcal{M}_{\psi }$. 
Let $y=\nabla\psi(x)$ and denote 
$\hat u = D(\nabla\psi)(x)[u]=\nabla^2\psi(x)\,u$ and $\hat v=\nabla^2\psi(x)\,v$.
Then
\[
g^{\psi^*}_{y}(\hat u,\hat v)
= \hat u^{\mathsf T}\nabla^2\psi^*(y)\hat v
= u^{\mathsf T}\nabla^2\psi(x)\,\nabla^2\psi^*(\nabla\psi(x))\,\nabla^2\psi(x)\,v
= u^{\mathsf T}\nabla^2\psi(x)\,v
= g^{\psi}_{x}(u,v),
\] where we used the identity $\nabla^2\psi^*(\nabla\psi(x))=(\nabla^2\psi(x))^{-1}$.
Therefore, using the conjugacy relation between $\psi$ and $\psi^*$, the mirror map $\nabla\psi$ in classical MD is a Riemannian isometry between $\mathcal M_{\psi}$ and $\mathcal M_{\psi^*}$.
\par 

On a general Riemannian manifold, one cannot in general expect to construct a globally defined Riemannian isometry to serve as a mirror map. However, the tangent space provides a natural local parameterization of the manifold. By using the exponential map as a mirror map, Algorithm~\ref{algo:md_rie} recovers the Geodesic Gradient Descent algorithm.
\begin{example}[Geodesic Gradient Descent]
Let $\mathcal M$ be a Riemannian manifold. In Algorithm~\ref{algo:md_rie}, fix $x_t\in\mathcal M$ and choose $r<\mathrm{inj}(\mathcal{M})$ so that $\mathrm{Exp}_{x_t}$ is a diffeomorphism on the geodesic ball $B_{r}(x_t)$. Define
\[
\varphi_t(x)=\mathrm{Exp}_{x_t}^{-1}(x),\quad x\in \hat{\mathcal{N}}_{r}(x_t),\quad R_x(v)=x+v,\quad 
\mathcal M_t =\{v\in \mathcal{T}_{x_t} \mathcal{M}:\|v\|<r\}\subseteq \mathcal{T}_{x_t}\mathcal M.
\]
Then the resulting update coincides with Geodesic Gradient Descent \cite{zhang2016first}.
\end{example}

Indeed, $\mathrm{Exp}_{x_t}(0)=x_t$ and $D\mathrm{Exp}_{x_t}(0)=\mathrm{Id}_{\mathcal{T}_{x_t}\mathcal M}$. Since $\varphi_t(x_t)=\mathrm{Exp}_{x_t}^{-1}(x_t)=0$, we obtain
\[
y_{t+1}
=R_{\varphi_t(x_t)} \left(-\eta_t\,D\varphi_t(x_t)[\nabla f(x_t)]\right)
=-\eta_t \nabla f(x_t),
\]
and then \[
x_{t+1}=\varphi_t^{-1}(y_{t+1})
=\mathrm{Exp}_{x_t} \left(-\eta_t \nabla f(x_t)\right),
\]
which is exactly Geodesic Gradient Descent.

Next, we discuss how to apply our framework to optimization over the Stiefel manifold.

\begin{definition}
    For $1\le p\le n$, the Stiefel manifold is
$$ \mathrm{St}(n,p)=\left\{X\in \mathbb{R}^{n\times p} :X^{\mathsf{T}} X=I_{p}\right\}. $$ Its tangent space at $X\in \mathrm{St}(n,p)$ is
\[
\mathcal{T}_X\mathrm{St}(n,p)=\{Z\in\mathbb R^{n\times p}:X^{\mathsf T}Z+Z^{\mathsf T}X=\mathbf{0}\}.
\]
The canonical metric is defined by, for all $A,B\in \mathcal{T}_X\mathrm{St}(n,p)$,
\[
\langle A,B\rangle_X=\mathrm{tr}\left(A^{\mathsf T}\Big(I_n-\tfrac12 XX^{\mathsf T}\Big)B\right).
\]
\end{definition} From this definition, the Stiefel manifold with $p=n$ is
$\mathrm{St}(n,n)=\{X\in\mathbb{R}^{n\times n}:X^{\mathsf T}X=I_n\}.$ Then $I_n\in \mathrm{St}(n,n)$ and
\[
\mathcal T_{I_n}\mathrm{St}(n,n)=\mathrm{skew}(n),
\]
where $\mathrm{skew}(n)=\{W\in\mathbb R^{n\times n}: W+W^{\mathsf T}=0\}$ denotes the set of $n\times n$ skew-symmetric matrices. Therefore, the Cayley transform provides a natural local reparameterization of $\mathrm{St}(n,n)$ near the identity. 
For any $W\in \mathrm{skew}(n)$, the Cayley transform is defined by
\[
C(W)=(I_n-W)^{-1}(I_n+W)\in \mathrm{St}(n,n).
\]Then $C$ is a diffeomorphism between $\mathrm{skew}(n)$ and the open subset
$\{X\in \mathrm{St}(n,n): X+I_n \text{ is invertible}\}$.
On this subset, its inverse is
\[
\varphi(X)=C^{-1}(X)=(X-I_n)(X+I_n)^{-1}.
\]
Viewing $\varphi$ as a smooth map into $\mathrm{skew}(n)$, we have
\[
D\varphi(I_n)[W]=\tfrac12 W,\qquad \forall\, W\in \mathrm{skew}(n).
\]Thus, when $x_t=I_n$ and using the Euclidean retraction on $\mathrm{skew}(n)$, the update in dual variable $y$ is
\[
y_{t+1}=-\eta_t\,D\varphi(I_n)[\nabla f(I_n)]
= -\tfrac{\eta_t}{2}\nabla f(I_n)\in \mathrm{skew}(n),
\]
and mapping back yields
\[
x_{t+1}= \varphi^{-1}(y_{t+1})
= C(y_{t+1})
= (I_n-y_{t+1})^{-1}(I_n+y_{t+1})\in \mathrm{St}(n,n).
\]  For a general base point $x_t=X_0\in \mathrm{St}(n,n)$, since $\mathrm{St}(n,n)$ is a homogeneous manifold, we use the translation
\[
O_{X_0}:\mathrm{St}(n,n)\to\mathrm{St}(n,n),\qquad O_{X_0}(X)=X X_0^{\mathsf T},
\]
which satisfies $O_{X_0}(X_0)=I_n$. 
On a neighborhood of $X_0$ where $O_{X_0}(X)+I_n$ is invertible, we define
\[
\varphi(X)=C^{-1} \big(O_{X_0}(X)\big)
=\big(XX_0^{\mathsf T}-I_n\big)\big(XX_0^{\mathsf T}+I_n\big)^{-1}.
\]
This provides a local reparameterization of $\mathrm{St}(n,n)$ around $X_0$ with $\varphi(X_0)=\mathbf{0}$.

\begin{example}[Curvilinear Gradient Descent on $\mathrm{St}(n,n)$] In Algorithm~\ref{algo:md_rie}, let $\mathcal M_t = \mathcal{T}_{I_n}\mathrm{St}(n,n)=\mathrm{skew}(n)$. 
Define $\varphi_t$ at $X_t$ by
\[
\varphi_t(X)=\big(XX_t^{\mathsf T}-I_n\big)\big(XX_t^{\mathsf T}+I_n\big)^{-1},
\]
which is well-defined whenever $XX_t^{\mathsf T}+I_n$ is invertible, and use Euclidean retraction
$R_X(V)=X+V$ on the vector space $\mathrm{skew}(n)$.
\end{example}

Since $\mathcal{T}_{X_t}\mathrm{St}(n,n)=\{W X_t: W\in \mathrm{skew}(n)\}$, the Riemannian gradient can be written as
$\nabla f(X_t)=W_tX_t$ for some $W_t\in \mathrm{skew}(n)$.
The RMD update then in Algorithm~\ref{algo:md_rie} admits the explicit form
\begin{equation}\label{eq:rmd_sti}
X_{t+1}=\big(I_n+\tfrac{\eta_t}{2}W_t\big)^{-1}\big(I_n-\tfrac{\eta_t}{2}W_t\big)X_t,
\end{equation}
which coincides with the Curvilinear Gradient Descent (CGD) update in~\cite{wen2013feasible}. For the general Stiefel manifold $\mathrm{St}(p,n)$ with $p<n$, we may extend any
$X\in \mathrm{St}(n,p)$ to an orthogonal matrix by choosing an orthonormal
complement $X_{\perp}\in\mathbb{R}^{n\times(n-p)}$ such that
$
\bar X = [\,X,\;X_{\perp}\,]\in \mathrm{St}(n,n).
$
Applying the same construction to $\bar X$ and then taking its first $p$ columns
yields the iteration \eqref{eq:rmd_sti} on $\mathrm{St}(n,p)$.

\subsection{Stochastic Riemannian Mirror Descent} 

In this subsection, we introduce a stochastic variant of Riemannian Mirror Descent (RMD). Suppose that, instead of the exact Riemannian gradient $\nabla f(x)$, we have access to an unbiased stochastic estimator $\nabla f(x,\xi)$ satisfying
\[
\mathbb{E}\left[\nabla f(x,\xi)\right]=\nabla f(x),
\]
where $\xi$ denotes the randomness in the oracle. In this setting, we can run Algorithm \ref{algo:md_rie} by replacing the exact gradient with its stochastic estimate. In particular, the dual update becomes
\begin{equation}\label{dualupdate_in_RMBD}
y_{t+1}
= R^{t}{\varphi_t(x_t)}\left(-\eta_t D\varphi_t(x_t)\big[\nabla f(x_t,\xi_t)\big]\right),
\end{equation}
where $\{\xi_t\}_{t=1}^T$ are independent sampling noises.
We refer to this algorithm as Stochastic Riemannian Mirror Descent (SRMD). Analogous to Theorem~\ref{thm:rmd}, we establish the following convergence theorem for SRMD.

%\begin{assumption}\label{asp:sto-bounded}
%    For all $x\in \mathcal{M}$, $\left\| \nabla f(x,\xi) \right\|_{x}^2 \leqslant G$ with the same $G$ as in Assumption \ref{asp:bounded_gradient}.
%\end{assumption}

\begin{theorem}[Convergence of SRMD]\label{thm:SRMD}
Assume that Assumptions~\ref{asp:Phi}--\ref{asp:bounded_gradient} hold. If the dual update in Algorithm \ref{algo:md_rie} is given by \eqref{dualupdate_in_RMBD}, where $\{\xi_t\}_{t=1}^T$ are i.i.d. random variables satisfying $\mathbb{E}\left[\nabla f(x_t,\xi_t)|x_t\right]=\nabla f(x_t)$, $\left\| \nabla f(x_t,\xi_t) \right\|_{x_t} \leqslant G$ and $\sigma^2= \sup_{t}\mathbb{E} [\left\| \nabla f(x_t,\xi_t) -\nabla f(x_t) \right\|^2 |x_t] <\infty$. 
Let $\eta_t\equiv \eta$ be a constant step size satisfying
\[
0<\eta<\min\left\{\frac{1}{L_{\Phi}G+2L_f+2L_fL_{\Phi}^2},\ \frac{2}{G},\ \frac{r}{G}\right\}.
\] 
Then the following statements hold:
\begin{enumerate}
\item (Nonconvex case) 
\begin{equation*} 
\frac{1}{T}\sum _{t=1}^{T}\mathbb{E} \| \nabla f(x_{t} )\| _{x_{t}}^{2} \leqslant \mathcal{O}\left(\frac{1}{\eta T} +\eta \sigma ^{2}\right).
\end{equation*}

\item (Geodesically convex case) If, in addition, $f$ is geodesically convex on $A\subset \mathcal{M}$ (from Assumption~\ref{asp:A}), then 
\begin{equation*} 
\mathbb{E} f( x_{T}) -f\left( x^{*}\right) \leqslant \mathcal{O}\left( \eta ^{2}T \sigma ^{2} +\frac{1}{\eta T}\right).
\end{equation*}
\end{enumerate}

In particular, take $ \eta \propto \frac{1}{T^{1/2}}$ and $\eta \propto \frac{1}{T^{2/3}}$, respectively. Then we have
$$\frac{1}{T}\sum _{t=1}^{T}\mathbb{E} \| \nabla f(x_{t} )\| _{x_{t}}^{2} \leqslant \mathcal{O}\left(\frac{1}{\sqrt{T}}\right) $$ in nonconvex case and
        $$  \mathbb{E} f( x_{T}) -f\left( x^{*}\right) \leqslant \mathcal{O}\left( \frac{1}{T^{1/3}}\right)$$
        in the geodesically convex case.
    \end{theorem}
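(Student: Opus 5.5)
The plan is to reduce SRMD to a retraction-based stochastic gradient step on $\mathcal M$, then run the standard descent-lemma and three-point arguments. Assumption~\ref{asp:Phi} lets us write $x_{t+1}=\hat R^t_{x_t}(-\eta g_t)$ with $g_t=\nabla f(x_t,\xi_t)$. Equivalently,
\[
x_{t+1}=\mathrm{Exp}_{x_t}\big(\Phi_{x_t}(-\eta g_t)\big).
\]
Since $\|\eta g_t\|_{x_t}\le \eta G<r$, the $L_\Phi$-regularity applies. Because $\Phi_x(0)=0$ and $D\Phi_x(0)=\mathrm{Id}$, Taylor's theorem gives
\[
v_t:=\Phi_{x_t}(-\eta g_t)=-\eta g_t+e_t,\qquad \|e_t\|_{x_t}\le \tfrac{L_\Phi}{2}\eta^2\|g_t\|^2_{x_t}\le \tfrac{L_\Phi}{2}\eta^2G\|g_t\|_{x_t}.
\]
In particular $\|v_t\|_{x_t}\le (1+L_\Phi\eta G/2)\eta\|g_t\|_{x_t}$. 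The restriction $\eta<2/G$ is what keeps the factor $1+L_\Phi\eta G/2$ bounded by a constant times $L_\Phi$. This is the same reduction as in the proof of Theorem~\ref{thm:rmd}, now applied with a random direction.

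\textbf{Nonconvex case.}
1. Use $L_f$-smoothness along the geodesic $\mathrm{Exp}_{x_t}(sv_t)$ to get
\[
f(x_{t+1})\le f(x_t)+\langle\nabla f(x_t),v_t\rangle+\tfrac{L_f}{2}\|v_t\|^2 .
\]
2. Substitute the decomposition of $v_t$:
\[
f(x_{t+1})\le f(x_t)-\eta\langle\nabla f(x_t),g_t\rangle+\tfrac{L_\Phi}{2}\eta^2 G\|g_t\|+C L_f(1+L_\Phi^2)\eta^2\|g_t\|^2 .
\]
The cross term $\langle\nabla f(x_t),e_t\rangle$ is handled with $\|\nabla f(x_t)\|\le G$.
3. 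Take conditional expectation. Unbiasedness gives $\mathbb E\langle\nabla f(x_t),g_t\rangle=\|\nabla f(x_t)\|^2$, and the variance decomposition gives $\mathbb E\|g_t\|^2=\|\nabla f(x_t)\|^2+\sigma^2$.
4. The error term $\eta^2G\|g_t\|$ must be converted into $\eta^2\|g_t\|^2$. I would do this by bounding it by $\eta^2\langle\nabla f(x_t),\cdot\rangle$-type quantities where possible. Otherwise I would follow the deterministic proof, which should already have absorbed it into the constant $L_\Phi G/2$ of the step-size condition.
5. The step-size condition $\eta(L_\Phi G+2L_f+2L_fL_\Phi^2)<1$ then yields
\[
\tfrac{\eta}{2}\,\mathbb E\|\nabla f(x_t)\|^2\le \mathbb E f(x_t)-\mathbb E f(x_{t+1})+C\eta^2\sigma^2 .
\]
6. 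Telescope and use that $f$ is bounded below on the compact set $A$. This gives $\mathcal O(1/(\eta T)+\eta\sigma^2)$.

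\textbf{Convex case.}
1. Use the geodesic comparison (law-of-cosines) inequality on the compact set $A$, whose curvature is bounded, with a constant $\zeta$ as in Zhang--Sra:
\[
d(x_{t+1},x^*)^2\le d(x_t,x^*)^2-2\langle v_t,\mathrm{Exp}^{-1}_{x_t}x^*\rangle+\zeta\|v_t\|^2 .
\]
2. Combine it with geodesic convexity, $f(x_t)-f^*\le-\langle\nabla f(x_t),\mathrm{Exp}^{-1}_{x_t}x^*\rangle$, after taking expectations. The terms $e_t$ and $\zeta\|v_t\|^2$ contribute $\mathcal O(\eta^2)$. Summing over $t$ gives
\[
\sum_t(\mathbb E f(x_t)-f^*)\lesssim D^2/\eta+\eta T(\cdot).
\]
3. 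Convert the average to a last-iterate bound using the per-step inequality from the nonconvex case: $\mathbb E f(x_{t+1})\le\mathbb E f(x_t)+C\eta^2\sigma^2$. Summing this forward from each $t$ to $T$ gives
\[
\mathbb E f(x_T)\le\tfrac1T\sum_t\mathbb E f(x_t)+CT\eta^2\sigma^2 ,
\]
which produces the $\eta^2T\sigma^2$ term.
4. The $1/(\eta T)$ term comes from the distance telescoping in step 2. The remaining $\mathcal O(\eta)$ gradient-norm contributions are controlled by the nonconvex bound on $\sum_t\mathbb E\|\nabla f(x_t)\|^2$.
5. Optimize over $\eta$ to obtain the stated rates: $\eta\propto T^{-1/2}$ in the nonconvex case and $\eta\propto T^{-2/3}$ in the convex case.

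\textbf{Main obstacle.} I expect the convex case to be hardest. The retraction error $e_t$ and the curvature factor $\zeta$ must be kept from producing an $\mathcal O(\eta)$ bias that does not vanish. Step 3 (last-iterate conversion) then has to be arranged so that only $\eta^2T\sigma^2$, and no $\eta T G^2$-type terms, survive. My first approach is to bound $\|v_t\|^2$ by $\eta^2(\|\nabla f(x_t)\|^2+\sigma^2)$ and reuse the nonconvex sum to absorb the $\|\nabla f\|^2$ part.
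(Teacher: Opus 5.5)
Your nonconvex argument is essentially the paper's (Lemma~\ref{lemma3:stoc_one_step_progress} plus telescoping). Both use smoothness along the geodesic to $x_{t+1}$, the retraction error from Lemma~\ref{lemma1:retraction}, $\eta G<2$ to control the higher-order term, and unbiasedness plus the variance bound. Your step 4, however, is fixing a problem you created in step 2. From $|\langle\nabla f(x_t),e_t\rangle|\le G\|e_t\|_{x_t}$ and your own bound $\|e_t\|_{x_t}\le\frac{L_\Phi}{2}\eta^2\|g_t\|_{x_t}^2$, the cross term is $\frac{L_\Phi G}{2}\eta^2\|g_t\|_{x_t}^2$. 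That is already quadratic, and it is exactly how the paper handles it. Do not weaken it to a bound linear in $\|g_t\|_{x_t}$: a term $\eta^2G\,\mathbb{E}_t\|g_t\|_{x_t}$ would leave an error of order $\eta G\sigma$ rather than $\eta\sigma^2$.

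For the convex case your route is genuinely different, and it is valid.

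\textbf{The paper's route.} It never tracks $d(x_t,x^*)$. It uses the potential $E_t=t^2(\mathbb{E}f(x_t)-f(x^*))$ and inserts two facts: the one-step bound $\mathbb{E}_tf(x_{t+1})-f(x_t)\le-(\eta-C_{sd}\eta^2)\|\nabla f(x_t)\|^2+C_{sd}\eta^2\sigma^2$, and convexity $f(x_t)-f(x^*)\le-\langle\nabla f(x_t),\mathrm{Exp}^{-1}_{x_t}x^*\rangle$. Completing the square in $\nabla f(x_t)$ means only $\|\mathrm{Exp}^{-1}_{x_t}x^*\|\le\mathrm{diam}(A)$ is needed. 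Summing gives $E_T\lesssim T/\eta+\eta^2\sigma^2T^3$, which is the last-iterate bound directly, with no curvature constant.

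\textbf{Your route.} You use the law of cosines, then average, then convert to the last iterate. Since $\|e_t\|$ and $\|v_t\|^2$ are both $\lesssim\eta^2\|g_t\|^2$, and $\eta\sum_t\mathbb{E}\|\nabla f(x_t)\|^2$ is controlled by the nonconvex estimate, you get $\frac1T\sum_t(\mathbb{E}f(x_t)-f^*)\lesssim\frac{1}{\eta T}+\eta\sigma^2$. Your last-iterate conversion then adds $\lesssim T\eta^2\sigma^2$.

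Contrary to your expectation, an $\eta\sigma^2$ term does survive. It is harmless: $\eta\sigma^2\le\frac12(\eta^2T\sigma^2+\sigma^2/T)$, and since $\sigma^2\le4G^2$ and $\eta$ is bounded, $\sigma^2/T=\mathcal O(1/(\eta T))$.

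\textbf{Trade-off.} The paper's argument is elementary and curvature-free. Yours needs a lower sectional-curvature bound on $A$ and a Toponogov-type comparison for triangles inside $A$. These are available because $A$ is compact and geodesically convex, but they are heavier tools, and $\zeta$ enters your constants. In return you get the stronger averaged (random-iterate) guarantee $\mathcal O(1/(\eta T)+\eta\sigma^2)$, i.e.\ $\mathcal O(T^{-1/2})$ for $\eta\propto T^{-1/2}$. The paper's last-iterate potential argument does not give this.
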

A detailed proof is given in Section \ref{sec:thm}.

As an illustration, we specialize SRMD to optimization over the Stiefel manifold
$\mathrm{St}(n,p)$ and obtain a randomized algorithm suitable for the regime
where $p$ is large. Wen and Yin~\cite{wen2013feasible} use the Sherman--Morrison--Woodbury (SMW) formula to accelerate the Cayley-type update
\eqref{eq:rmd_sti} when $2p \ll n$, exploiting the fact that the associated
skew-symmetric matrix has rank at most $2p$. When $2p \geq  n$, this low-rank
reduction no longer yields a smaller system to invert (and hence provides little
or no computational advantage); in this regime, one typically resorts to direct
solvers or other implementations; see~\cite{wen2013feasible}. Motivated by SRMD,
we propose the Stochastic Curvilinear Gradient Descent (SCGD) algorithm, which
is amenable to parallel implementation and is effective when $p$ is large.

We need the following lemma (Lemma~1 in~\cite{wen2013feasible}) to express the
tangent-space gradient in terms of a skew-symmetric matrix.
\begin{lemma}[Lemma~1 in~\cite{wen2013feasible}]
Let $f:\mathbb{R}^{n\times p}\to\mathbb{R}$ be differentiable and let
$X\in \mathrm{St}(n,p)$.  Then 
$$ \nabla_{\mathrm{St}(n,p)} f(X) = WX, \quad W= \nabla_{\mathbb{R}^{n\times p}} f(X)X^\top - X(\nabla_{\mathbb{R}^{n\times p}} f(X))^\top \in \mathbb{R}^{n\times n}.$$ 
\end{lemma}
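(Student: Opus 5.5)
The statement to prove is Lemma~1 of \cite{wen2013feasible}: the gradient on $\mathrm{St}(n,p)$ under the canonical metric is $WX$, with $W = GX^{\mathsf T} - XG^{\mathsf T}$ and $G=\nabla_{\mathbb{R}^{n\times p}} f(X)$. The plan is to verify the two properties that characterize the Riemannian gradient in the definition recalled in Section~\ref{sec:pre}:
\begin{enumerate}
\item $WX \in \mathcal{T}_X\mathrm{St}(n,p)$;
\item $\langle WX, Z\rangle_X = Df(X)[Z] = \mathrm{tr}(G^{\mathsf T}Z)$ for every tangent vector $Z$.
\end{enumerate}
Uniqueness of the Riesz representer then gives the claim. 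Here $f$ is regarded as restricted to the embedded submanifold, so $Df(X)[Z]$ is the Euclidean directional derivative.

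\textbf{Tangency.} The matrix $W$ is skew-symmetric, since $W^{\mathsf T} = XG^{\mathsf T} - GX^{\mathsf T} = -W$. For $Z = WX$ we get $X^{\mathsf T}Z + Z^{\mathsf T}X = X^{\mathsf T}WX + X^{\mathsf T}W^{\mathsf T}X = 0$. So $WX$ lies in the tangent space as described in the definition of $\mathrm{St}(n,p)$.

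\textbf{Metric identity.} First expand $WX = G - XG^{\mathsf T}X$, using $X^{\mathsf T}X = I_p$. Then write $P = I_n - \tfrac12 XX^{\mathsf T}$ and compute
\[
P(WX) = G - XG^{\mathsf T}X - \tfrac12 XX^{\mathsf T}G + \tfrac12 XG^{\mathsf T}X ,
\]
where the last term uses $X^{\mathsf T}X = I_p$ again. Hence
\[
\langle WX, Z\rangle_X = \mathrm{tr}(G^{\mathsf T}Z) - \tfrac12\,\mathrm{tr}\big(X^{\mathsf T}GX^{\mathsf T}Z\big) - \tfrac12\,\mathrm{tr}\big(G^{\mathsf T}XX^{\mathsf T}Z\big).
\]
It remains to show that the last two terms cancel for tangent $Z$. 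Set $S = X^{\mathsf T}Z$, which is skew-symmetric because $Z$ is tangent. The two correction terms are $\mathrm{tr}(X^{\mathsf T}G\,S)$ and $\mathrm{tr}(G^{\mathsf T}X\,S)$. Since $\mathrm{tr}(M^{\mathsf T}S) = \mathrm{tr}(S^{\mathsf T}M) = -\mathrm{tr}(SM) = -\mathrm{tr}(MS)$, applying this with $M = X^{\mathsf T}G$ shows the second term is the negative of the first. So the sum vanishes and $\langle WX, Z\rangle_X = \mathrm{tr}(G^{\mathsf T}Z)$, as required.

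\textbf{Main obstacle.} The only delicate step is this trace cancellation, which is exactly where skew-symmetry of $X^{\mathsf T}Z$ is used. It is also where the factor $\tfrac12$ in the canonical metric matters: under the Euclidean metric the formula would carry an extra factor on the $XG^{\mathsf T}X$ term. I would therefore double-check the bookkeeping of the $\tfrac12 XX^{\mathsf T}$ terms carefully. Everything else is a direct substitution.
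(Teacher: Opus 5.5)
Your proof is correct. Tangency follows from the skew-symmetry of $W$. With $M = X^{\mathsf T}G$ and $S = X^{\mathsf T}Z$ skew, the identity $\langle WX, Z\rangle_X = \mathrm{tr}(G^{\mathsf T}Z) - \tfrac12\mathrm{tr}(MS) - \tfrac12\mathrm{tr}(M^{\mathsf T}S) = \mathrm{tr}(G^{\mathsf T}Z)$ is exactly right.

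The paper gives no proof of its own here; it imports the statement from Wen and Yin~\cite{wen2013feasible}. There the lemma is essentially the algebraic identity $G - XG^{\mathsf T}X = (GX^{\mathsf T} - XG^{\mathsf T})X$, with $G - XG^{\mathsf T}X$ understood as the canonical-metric gradient. Your Riesz-representer check supplies that identification directly. It also makes explicit that the formula depends on the canonical metric fixed in the paper's definition of $\mathrm{St}(n,p)$. If $I_n - \tfrac12 XX^{\mathsf T}$ were replaced by $I_n - cXX^{\mathsf T}$, the two correction terms would leave $-(1-2c)\,\mathrm{tr}(MS)$, so $c=\tfrac12$ is precisely what is needed.

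Your intermediate expression $P(WX) = G - \tfrac12 X(X^{\mathsf T}G + G^{\mathsf T}X)$ is the Frobenius-orthogonal projection of $G$ onto $\mathcal{T}_X\mathrm{St}(n,p)$. This gives a one-line version of the same argument.

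It also corrects your closing remark. The Euclidean-metric gradient is this projection, namely $WX + \tfrac12 X(G^{\mathsf T}X - X^{\mathsf T}G)$. So it differs from $WX$ by more than a rescaling of the $XG^{\mathsf T}X$ term. That remark plays no role in your proof.
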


The CGD update requires applying $(I_n+\eta_t W_t/2)^{-1}$. When $2p\ll n$, $W_t$ has rank at most $2p$ and the SMW formula can substantially reduce the computational cost. When $2p\geq n$, this reduction no longer yields a smaller
system to invert, and computing $(I_n+\eta_t W_t/2)^{-1}$ can be expensive.

\begin{algorithm}[!htb]
        \caption{Stochastic Curvilinear Gradient Descent (SCGD)}
        \label{algo:scd}
        \begin{algorithmic}
        \STATE \textbf{Input:} {Iteration number $ T $, Initial point $ X_0 $, step size $ \{\eta_t\}_{t=0}^{T-1} $, parameter $K$}\\
        \FOR{$t = 0$ to $T-1$}
          \STATE Randomly divide $ [n] $ into $ K $ sets $ \mathcal{I}_1,\cdots,\mathcal{I}_K $ evenly
          \FOR{$ k = 1 $ to $ K $ }
            \STATE $ W_k = \nabla_{\mathbb{R}^{n\times p}} f(X_t)(\mathcal{I}_k,:) (X_t(\mathcal{I}_k,:))^\T - X_t(\mathcal{I}_k,:)(\nabla_{\mathbb{R}^{n\times p}} f(X_t)(\mathcal{I}_k,:))^\T $ \\
            \STATE $ X_{t+1}(\mathcal{I}_k,:) = (I_n+\eta_t/2 W_k)(I_n-\eta_t W_{k} /2)^{-1}X_{t}(\mathcal{I}_k,:) $ \\       
           \ENDFOR
        \ENDFOR
        \RETURN $ X_T $
        \end{algorithmic}
    \end{algorithm}

To address the large-$p$ regime, we construct a block-diagonal approximation by
randomly partitioning $[n]$ into $K$ equal-sized subsets
$\mathcal{I}_1,\ldots,\mathcal{I}_K$ (with $n=mK$) and defining $\hat W$ by
keeping only the within-block entries:
\[
\hat w_{ij}=
\begin{cases}
w_{ij}, & i,j\in \mathcal{I}_k \text{ for some } k,\\
0, & \text{otherwise}.
\end{cases}
\]For example, if $ \mathcal{I}_k = \{(k-1)m+1,\cdots,km\} $, then 
    $$ \hat{W} =\begin{pmatrix}
        W_{1} & O & O & O\\
        O & W_{2} & O & O\\
        O & O & \cdots  & O\\
        O & O & O & W_{K}
        \end{pmatrix} .$$
For any $i\neq j$, we have
$\mathbb{P}(i,j\text{ belong to the same block})=\frac{m-1}{n-1}$, and hence
\[
\mathbb{E}[\hat{W}] = \frac{n/K-1}{n-1} W. 
\] 
Therefore, $\frac{n-1}{n/K-1}\hat W$ is an unbiased estimator of $W$. Since
$\hat W$ is block diagonal, applying $(I+\eta_t \hat W/2)^{-1}$ reduces to
inverting $K$ independent $m\times m$ systems
$(I+\eta_t \hat W(\mathcal{I}_k,\mathcal{I}_k)/2)^{-1}$, which can be readily
parallelized. This enables the algorithm to handle large-scale problems; see
Algorithm~\ref{algo:scd} for details.

\section{Convergence Analysis}\label{sec:thm}
This section is devoted to proving Theorems~\ref{thm:rmd} and ~\ref{thm:SRMD}. To this end, we first establish the following lemma on the difference between retraction and the exponential map. Let
$\Phi_x(u) = \mathrm{Exp}_x^{-1}(R_x(u))$ be well-defined for a retraction $R_x$.

\begin{lemma}\label{lemma1:retraction}
Given a retraction $ R_x $ that is $L_\Phi$-regular within radius $r$. Then for all $u\in B_r(x)$,
\begin{equation*}
\Vert \Phi _{x}( u) -u\Vert _{x} \leqslant \frac{L_{\Phi }}{2}\Vert u\Vert _{x}^{2} .
\end{equation*}
\end{lemma}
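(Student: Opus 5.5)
We will apply a second-order Taylor expansion with integral remainder to $\Phi_x$, viewed as a map on the vector space $\mathcal{T}_x\mathcal M$. All derivatives of $\Phi_x$ are taken in the linear space $\mathcal{T}_x\mathcal M$, so no covariant derivatives are needed. By Definition~\ref{Lphi}, $\Phi_x$ is $C^2$ on $B_r(x)$. The ball $B_r(x)$ is convex in $\mathcal{T}_x\mathcal M$, so the whole segment $\{tu: t\in[0,1]\}$ lies in $B_r(x)$ for $u\in B_r(x)$.

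The first step is to identify the zeroth- and first-order terms at $u=0$:
\begin{itemize}
\item Since $R_x(0)=x$ and $\mathrm{Exp}_x^{-1}(x)=0$, we have $\Phi_x(0)=0$.
\item By the chain rule, $D\Phi_x(0)=D(\mathrm{Exp}_x^{-1})(x)\circ DR_x(0)$.
\item By Definition~\ref{Retraction}, $DR_x(0)=\mathrm{Id}$.
\item Since $D\mathrm{Exp}_x(0)=\mathrm{Id}$, the inverse function theorem gives $D(\mathrm{Exp}_x^{-1})(x)=\mathrm{Id}$. This is valid because $R_x(B_r(x))\subseteq \mathcal N_\rho(x)$ with $\rho<\mathrm{inj}(\mathcal M)$, so $\mathrm{Exp}_x^{-1}$ is a well-defined local diffeomorphism there.
\end{itemize}
Hence $D\Phi_x(0)=\mathrm{Id}_{\mathcal{T}_x\mathcal M}$.

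Next, set $h(t)=\Phi_x(tu)$, a $C^2$ curve in the fixed vector space $\mathcal{T}_x\mathcal M$. Its derivatives are $h'(t)=D\Phi_x(tu)[u]$ and $h''(t)=D^2\Phi_x(tu)[u,u]$. Taylor's formula with integral remainder gives
\[
\Phi_x(u)-u = h(1)-h(0)-h'(0)=\int_0^1 (1-t)\,D^2\Phi_x(tu)[u,u]\,dt .
\]
Taking the norm $\|\cdot\|_x$, we move it inside the integral; this is legitimate because the integral is a Bochner/Riemann integral in a finite-dimensional normed space. Applying the uniform bound $\sup_{t}\|D^2\Phi_x(tu)[u,u]\|_x\le L_\Phi\|u\|_x^2$ from Definition~\ref{Lphi}, we get
\[
\|\Phi_x(u)-u\|_x\le L_\Phi\|u\|_x^2\int_0^1(1-t)\,dt=\frac{L_\Phi}{2}\|u\|_x^2 .
\]

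There is no real obstacle here; the only point needing care is justifying $D\Phi_x(0)=\mathrm{Id}$. That requires $\mathrm{Exp}_x^{-1}$ to be the smooth local inverse on a neighbourhood containing $R_x(tu)$ for all $t\in[0,1]$, which is exactly what the condition $\rho<\mathrm{inj}(\mathcal M)$ in Definition~\ref{Lphi} provides.
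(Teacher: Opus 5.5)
Your proposal is correct and follows essentially the same approach as the paper: a second-order Taylor expansion of $t\mapsto\Phi_x(tu)$ with integral remainder, using $\Phi_x(0)=0$, $D\Phi_x(0)=\mathrm{Id}$, and the uniform bound on $D^2\Phi_x(tu)[u,u]$ to get the factor $\int_0^1(1-t)\,dt=\tfrac12$. Your chain-rule justification of $D\Phi_x(0)=\mathrm{Id}$, via $DR_x(0)=\mathrm{Id}$ and $D(\mathrm{Exp}_x^{-1})(x)=\mathrm{Id}$ on the injectivity ball, spells out a step the paper only asserts.
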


\begin{proof}
For $u\in B_r(x)$, consider the Taylor expansion of $ g( t) =\Phi _{x}( tu)$ at $ t=0$,
    \begin{align*}
    g( 1) =\Phi _{x}( u) & =\Phi _{x}( 0) +D\Phi _{x}( 0)[ u] +\int _{0}^{1}( 1-t) D^{2} \Phi ( tu)[ u,u] dt\\
     & =u+\int _{0}^{1}( 1-t) D^{2} \Phi ( tu)[ u,u] dt,
    \end{align*}
    where we use $R_{x}(0)=x$ and $D\Phi_{x}(0)=\mathrm{Id}$.

    By assumption, there holds
    \begin{equation*}
    \Vert \Phi _{x}( u) -u\Vert _{x} \leqslant L_{\Phi }\Vert u\Vert _{x}^{2}\int _{0}^{1} (1-t)dt=\frac{L_{\Phi }}{2}\Vert u\Vert _{x}^{2}, 
    \end{equation*} which implies the desired result.
\end{proof}

Using Lemma \ref{lemma1:retraction}, we can analyze the one-step progress of RMD. Recall that $$ \hat{R}_{x}^t( u) =\varphi _{t}^{-1}( R_{\varphi _{t}( x)}^t( D\varphi _{t}( x)[u])).$$ Then $D\hat{R}_{x}^t( 0) =( D\varphi _{t}( x))^{-1} D\varphi _{t}( x) =\mathrm{Id}$, which implies that $ \hat{R}_x^t $ is a retraction.

\begin{lemma}\label{lemma2:one_step_progress}
   Suppose that $ \hat{R}_{x}^t$ satisfies Assumption \ref{asp:Phi} and $f$ satisfies Assumptions \ref{asp:A}-\ref{asp:bounded_gradient}, then 
    \begin{equation*}
    f( x_{t+1}) -f( x_{t}) \leqslant -\left( \eta _{t} -L_{\Phi } G\eta _{t}^{2} /2-\eta _{t}^{2} L_{f}\right)\Vert \nabla f( x_{t})\Vert _{x_{t}}^{2} +\frac{L_{f} L_{\Phi }^{2} \eta _{t}^{4}}{4}\Vert \nabla f( x_{t})\Vert _{x_{t}}^{4} .
    \end{equation*}
\end{lemma}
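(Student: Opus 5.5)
We write the update as $x_{t+1}=\hat R^t_{x_t}(u_t)$ with $u_t=-\eta_t\nabla f(x_t)$. This holds because $\varphi_t^{-1}(R^t_{\varphi_t(x_t)}(D\varphi_t(x_t)[u_t]))$ is exactly the RMD step. We then pass from the retraction to the exponential map via $v_t=\Phi_{x_t}(u_t)=\mathrm{Exp}_{x_t}^{-1}(x_{t+1})$. This is legitimate since $\|u_t\|_{x_t}\leqslant\eta_t G\leqslant r$ (this is where the step-size condition $\eta<r/G$ enters), so Assumption~\ref{asp:Phi} and Lemma~\ref{lemma1:retraction} apply. They give
\[
v_t=u_t+e_t,\qquad \|e_t\|_{x_t}\leqslant \tfrac{L_\Phi}{2}\eta_t^2\|\nabla f(x_t)\|_{x_t}^2 .
\]
Moreover $x_{t+1}=\mathrm{Exp}_{x_t}(v_t)$ with $d_{\mathcal M}(x_t,x_{t+1})=\|v_t\|_{x_t}$, because $\|v_t\|$ stays within the injectivity radius.

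Next we establish the geodesic descent lemma: for $L_f$-smooth $f$,
\[
f(\mathrm{Exp}_x(v))\leqslant f(x)+\langle\nabla f(x),v\rangle_x+\tfrac{L_f}{2}\|v\|_x^2 .
\]
The proof sets $\gamma(s)=\mathrm{Exp}_x(sv)$ and writes $f(\gamma(1))-f(x)=\int_0^1\langle\nabla f(\gamma(s)),\dot\gamma(s)\rangle\,ds$. Since $\dot\gamma(s)=\Gamma_x^{\gamma(s)}v$ and parallel transport is an isometry, the integrand minus $\langle\nabla f(x),v\rangle$ is bounded by $\|\nabla f(\gamma(s))-\Gamma\nabla f(x)\|\,\|v\|\leqslant L_f s\|v\|^2$. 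Here we use Definition~\ref{L-smooth}; the geodesic is minimizing since $\|v\|<\mathrm{inj}$, so it is the one used there.

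Plugging in $v=v_t$ gives
\[
f(x_{t+1})-f(x_t)\leqslant -\eta_t\|\nabla f(x_t)\|^2+\langle\nabla f(x_t),e_t\rangle+\tfrac{L_f}{2}\|u_t+e_t\|^2 .
\]
We bound the middle term by Cauchy--Schwarz and then use $\|\nabla f\|\leqslant G$ (Assumption~\ref{asp:bounded_gradient}):
\[
|\langle\nabla f,e_t\rangle|\leqslant \|\nabla f\|\cdot\tfrac{L_\Phi}{2}\eta_t^2\|\nabla f\|^2\leqslant \tfrac{L_\Phi G}{2}\eta_t^2\|\nabla f\|^2 .
\]
For the last term we use $\|a+b\|^2\leqslant 2\|a\|^2+2\|b\|^2$:
\[
\tfrac{L_f}{2}\|u_t+e_t\|^2\leqslant L_f\eta_t^2\|\nabla f\|^2+L_f\cdot\tfrac{L_\Phi^2}{4}\eta_t^4\|\nabla f\|^4 .
\]
Collecting terms gives exactly the claimed coefficients $-(\eta_t-L_\Phi G\eta_t^2/2-L_f\eta_t^2)$ and $L_fL_\Phi^2\eta_t^4/4$.

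The main obstacle is the geometric bookkeeping rather than the algebra. We must ensure $v_t$ lies within the injectivity radius, so that $\mathrm{Exp}_{x_t}^{-1}$ is defined and the geodesic $s\mapsto\mathrm{Exp}_{x_t}(sv_t)$ is the unique minimizing one to which the $L_f$-smoothness definition applies. This follows from the containment $R_x(B_r(x))\subseteq\mathcal N_\rho(x)$ with $\rho<\mathrm{inj}(\mathcal M)$ built into Definition~\ref{Lphi}. We must also justify the descent lemma along this geodesic via the integral representation and parallel transport.
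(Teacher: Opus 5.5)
Your proposal is correct and follows the paper's proof. Your $v_t=\Phi_{x_t}(u_t)$ is the paper's $-d_t$, and you use the same three ingredients: Cauchy--Schwarz on the retraction error via Lemma~\ref{lemma1:retraction}, the bound $\|\nabla f\|\leqslant G$, and $\|a+b\|^2\leqslant 2\|a\|^2+2\|b\|^2$; these give identical coefficients. You also make explicit what the paper leaves tacit: the geodesic descent lemma derived from Definition~\ref{L-smooth}, the injectivity-radius bookkeeping, and the requirement $\eta_t G\leqslant r$ needed to invoke Lemma~\ref{lemma1:retraction}, which the lemma's statement omits.
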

\begin{proof}
    Let $ d_{t} =-\mathrm{Exp}^{-1}_{x_t}(\hat{R}^t_{x_{t}}( -\eta _{t} \nabla f( x_{t})))$, by Assumption \ref{asp:smooth},
    \begin{align}
    f( x_{t+1}) -f( x_{t}) & \leqslant -\langle \nabla f( x_{t}) ,d_{t} \rangle _{x_{t}} +\frac{L_{f}}{2}\Vert d_{t}\Vert _{x_{t}}^{2} \notag\\ 
     & =-\eta _{t}\Vert \nabla f( x_{t})\Vert _{x_{t}}^{2} -\langle \nabla f( x_{t}) ,d_{t} -\eta _{t} \nabla f( x_{t}) \rangle _{x_{t}} +\frac{L_{f}}{2}\Vert d_{t}\Vert _{x_{t}}^{2}\notag \\
     & \leqslant -\left( \eta _{t} -\frac{L_{\Phi } G\eta _{t}^{2}}{2}\right)\Vert \nabla f( x_{t})\Vert _{x_t}^{2} +\frac{L_{f}}{2}\Vert d_{t}\Vert _{x_{t}}^{2}.\label{eq:descent1}
    \end{align}
    Here we use Cauchy inequality and Assumption \ref{asp:bounded_gradient} to bound $\langle \nabla f( x_{t}) ,d_{t} -\eta _{t} \nabla f( x_{t}) \rangle _{x_{t}}$ in the second line as
    \begin{align*}
    \langle \nabla f( x_{t}) ,d_{t} -\eta _{t} \nabla f( x_{t}) \rangle _{x_{t}} &\leqslant \Vert \nabla f( x_{t})\Vert _{x_{t}}\Vert d_{t} -\eta _{t} \nabla f( x_{t})\Vert _{x_{t}} \\
    &\leqslant G\Vert d_{t} -\eta _{t} \nabla f( x_{t})\Vert _{x_{t}} \\
    &\leqslant \frac{L_{\Phi } G\eta _{t}^{2}}{2}\Vert \nabla f( x_{t})\Vert _{x_t}^{2}.
    \end{align*}
    By Lemma \ref{lemma1:retraction}, 
    \begin{align*}
    \Vert d_{t}\Vert _{x_{t}}^{2} &=\Vert d_{t} -\eta _{t} \nabla f( x_{t}) +\eta _{t} \nabla f( x_{t})\Vert _{x_{t}}^{2}\\
    & \leqslant 2\Vert d_{t} -\eta _{t} \nabla f( x_{t})\Vert _{x_{t}}^{2} +2\Vert \eta _{t} \nabla f( x_{t})\Vert _{x_{t}}^{2}\\
     & \leqslant \frac{L_{\Phi }^{2} \eta _{t}^{4}}{2}\Vert \nabla f( x_{t})\Vert _{x_{t}}^{4} +2\eta _{t}^{2}\Vert \nabla f( x_{t})\Vert _{x_{t}}^{2}.
    \end{align*}
    Together with (\ref{eq:descent1}) we obtain 
    \begin{equation*}
    f( x_{t+1}) -f( x_{t}) \leqslant -\left( \eta _{t} -\frac{L_{\Phi } G\eta _{t}^{2}}{2} -\eta _{t}^{2} L_{f}\right)\Vert \nabla f( x_{t})\Vert _{x_{t}}^{2} +\frac{L_{f} L_{\Phi }^{2} \eta _{t}^{4}}{4}\Vert \nabla f( x_{t})\Vert _{x_{t}}^{4} .
    \end{equation*} We thus conclude the proof.
\end{proof}

Now we are ready to prove Theorem \ref{thm:rmd}.

\begin{proof}[Proof of Theorem \ref{thm:rmd}]
    If $ \eta < \min\left\{1/\left( L_{\Phi } G/2+L_{f} +L_{f} L_{\Phi }^{2}\right) ,2/G, r/G\right\}$, we have 
\begin{equation*}
\eta ^{4}\Vert \nabla f( x_{t})\Vert _{x_{t}}^{4} \leqslant 4\eta ^{2}\Vert \nabla f( x_{t})\Vert _{x_{t}}^{2} .
\end{equation*}
By Lemma \ref{lemma2:one_step_progress},
\begin{equation*}
f( x_{t+1}) -f( x_{t}) \leqslant -\left( \eta -L_{\Phi } G\eta ^{2} /2-L_{f} \eta ^{2} -L_{f} L_{\Phi }^{2} \eta ^{2}\right)\Vert \nabla f( x_{t})\Vert _{x_{t}}^{2} .
\end{equation*}
Let $ C_{d} =\eta -L_{\Phi } G\eta ^{2} /2-L_{f} \eta ^{2} -L_{f} L_{\Phi }^{2} \eta ^{2}  >0.$

\textbf{Nonconvex Case:} Summing up over $\displaystyle t$ and rearranging, we get 
\begin{align*}
\frac{1}{T}\sum _{t=1}^{T} \| \nabla f(x_{t} )\| _{x_{t}}^{2} & \leqslant \frac{f( x_{1}) -f( x_{T+1})}{C_{d} T} .
\end{align*}

\textbf{Geodesically convex case:} We construct a potential function to prove the convergence rate. Let $ A_{t+1} =A_{t} +a_{t} ,A_{0} =0,\ E_{t} =A_{t}\left( f( x_{t}) -f\left( x^{*}\right)\right), $ where $ a_{t}$ will be specified later. We have 
\begin{align}
E_{t+1} -E_{t} & =A_{t+1}\left( f( x_{t+1}) -f\left( x^{*}\right)\right) -(A_{t+1}-a_t)\left( f( x_{t}) -f\left( x^{*}\right)\right)\notag \\
 & =A_{t+1}( f( x_{t+1}) -f( x_{t})) + a_{t}\left( f( x_{t}) -f\left( x^{*}\right)\right) .\label{eq:potential_descent1}
\end{align}
By the definition of convexity, $f( x_{t}) -f\left( x^{*}\right) \leqslant -\langle \nabla f(x_{t}),\mathrm{Exp}^{-1}_{x_{t}}\left( x^{*}\right) \rangle _{x_{t}}$. Combining with (\ref{eq:potential_descent1}), we have
\begin{align*}
E_{t+1} -E_{t} & \leqslant -C_{d} A_{t+1}\Vert \nabla f( x_{t})\Vert_{x_t} ^{2} -a_{t} \langle \nabla f( x_{t}), \mathrm{Exp}^{-1}_{x_{t}}\left( x^{*}\right) \rangle _{x_{t}}\\
 & =-C_{d} A_{t+1}\left(\Vert \nabla f( x_{t})\Vert_{x_t} ^{2} + \langle \nabla f( x_{t}) ,\frac{a_{t}}{C_{d} A_{t+1}}\mathrm{Exp}^{-1}_{x_{t}}\left( x^{*}\right) \rangle_{x_t} \right) .
\end{align*}
Let $ H( v) =\Vert v\Vert _{x_{t}}^{2} + \langle v,\mathrm{Exp}^{-1}_{x_{t}}\left( x^{*}\right) /C\rangle _{x_{t}}$, where $ C$ is a constant. Since $ H $ is a convex function, it attains its minimal at stationary point $ v^* =  -\frac{\mathrm{Exp}^{-1}_{x_{t}}\left( x^{*}\right)}{2C} $. 
\begin{equation*}
H( v) \geqslant H\left(\frac{\mathrm{Exp}^{-1}_{x_{t}}\left( x^{*}\right)}{2C}\right) =-\frac{1}{4C^{2}}\left\Vert \mathrm{Exp}^{-1}_{x_{t}}\left( x^{*}\right)\right\Vert _{x_{t}}^{2} .
\end{equation*}
By this equation, 
\begin{equation}
E_{t+1} -E_{t} \leqslant \frac{a_{t}^{2}}{4C_{d} A_{t+1}}\left\Vert \mathrm{Exp}^{-1}_{x_{t}}\left( x^{*}\right)\right\Vert _{x_{t}}^{2} .\label{eq:potential_descent2}
\end{equation}
Since
%we assume that the geodesic between any two points in $ A $ exists and is unique, 
we have  $ \left\Vert \mathrm{Exp}^{-1}_{x_{t}}\left( x^{*}\right)\right\Vert_{x_t} =d\left( x_{t} ,x^{*}\right) \leqslant \text{diam}( A)$, plug it into (\ref{eq:potential_descent2}), 
\begin{equation*}
E_{t+1} -E_{t} \leqslant \frac{a_{t}^{2}\text{diam}( A)^{2}}{4C_{d} A_{t+1}} .
\end{equation*}
 Summation with respect to $ t$, we get  
\begin{equation*}
E_{T} \leqslant E_{0} +\frac{\text{diam}( A)^{2}}{4C_{d}}\sum _{t=0}^{T-1}\frac{a_{t}^{2}}{A_{t+1}} .
\end{equation*}
Let  $ A_{t} = t^{2} ,a_{t} =A_{t+1} -A_{t} =2t+1$,  we have 
\begin{align*}
E_{T} &\leqslant E_{0} +\frac{\text{diam}( A)^{2}}{4C_{d}}\sum _{t=0}^{T-1}\frac{( 2t+1)^{2}}{( t+1)^{2}} \\
&< E_{0} +\frac{\text{diam}( A)^{2}}{4C_{d}}\sum _{t=0}^{T-1}\frac{( 2t+2)^{2}}{( t+1)^{2}} \\
&=E_{0} +\frac{\text{diam}( A)^{2} T}{C_{d}} .
\end{align*}
By the definition of $ E $,  $ E_{T} =A_{T}\left( f( x_{T}) -f\left( x^{*}\right)\right)$,
\begin{equation*}
f( x_{T}) -f\left( x^{*}\right) \leqslant \frac{E_{0}}{T^{2}} +\frac{\text{diam}( A)^{2}}{C_{d} T} =\mathcal{O}\left(\frac{1}{\eta T}\right),
\end{equation*}
where the $ \eta $ factor in the denominator comes from $C_d$. This concludes the proof of Theorem \ref{thm:rmd}. 
\end{proof}

To establish the convergence guarantee for stochastic RMD, i.e., Theorem~\ref{thm:SRMD}, we first prove the following lemma. Its role is analogous to Lemma \ref{lemma2:one_step_progress}.

\begin{lemma}\label{lemma3:stoc_one_step_progress}
    Suppose that $ \hat{R}_{x}^t$ satisfies Assumption \ref{asp:Phi} and $f$ satisfies Assumptions \ref{asp:A}-\ref{asp:bounded_gradient}, then 
       \begin{align*}
         \mathbb{E}_{t}f( x_{t+1}) -f( x_{t}) \leqslant -\eta _{t}\Vert \nabla f( x_{t})\Vert _{x_{t}}^{2} +\left(\frac{L_{\Phi } G\eta _{t}^{2}}{2} +\eta _{t}^{2} L_{f}\right)\mathbb{E}_{t}\Vert \nabla f( x_{t} ,\xi _{t})\Vert _{x_{t}}^{2} \\
         \quad\quad\quad\quad\quad+\frac{L_{f} L_{\Phi }^{2} \eta _{t}^{4}}{4}\mathbb{E}_{t}\Vert \nabla f( x_{t} ,\xi _{t})\Vert _{x_{t}}^{4},
    \end{align*}
     where $ \mathbb{E}_{t}[\cdot] = \mathbb{E}[\cdot|x_1,\cdots,x_{t}] $. 
 \end{lemma}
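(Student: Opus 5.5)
The argument mirrors the proof of Lemma~\ref{lemma2:one_step_progress}, with the true gradient in the update replaced by the stochastic estimate $g_t:=\nabla f(x_t,\xi_t)$. Define the displacement
\[
d_t=-\mathrm{Exp}^{-1}_{x_t}\big(\hat R^t_{x_t}(-\eta_t g_t)\big),
\]
so that $x_{t+1}=\mathrm{Exp}_{x_t}(-d_t)$. Since $\|g_t\|_{x_t}\leqslant G$ and $\eta_t G\leqslant r$, the vector $-\eta_t g_t$ lies in $B_r(x_t)$. Hence Lemma~\ref{lemma1:retraction}, applied to the $L_\Phi$-regular retraction $\hat R^t$ of Assumption~\ref{asp:Phi}, gives
\[
\|d_t-\eta_t g_t\|_{x_t}\leqslant \tfrac{L_\Phi}{2}\eta_t^2\|g_t\|_{x_t}^2 .
\]
The $L_f$-smoothness descent inequality (Assumption~\ref{asp:smooth}), used exactly as in Lemma~\ref{lemma2:one_step_progress}, then yields
\[
f(x_{t+1})-f(x_t)\leqslant -\langle \nabla f(x_t),d_t\rangle_{x_t}+\tfrac{L_f}{2}\|d_t\|_{x_t}^2 .
\]

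Next I split the inner product as
\[
\langle \nabla f(x_t),d_t\rangle_{x_t}=\eta_t\langle \nabla f(x_t),g_t\rangle_{x_t}+\langle \nabla f(x_t),d_t-\eta_t g_t\rangle_{x_t}.
\]
The second term is controlled by Cauchy--Schwarz, $\|\nabla f(x_t)\|_{x_t}\leqslant G$ (Assumption~\ref{asp:bounded_gradient}) and the retraction bound, giving at most $\tfrac{L_\Phi G\eta_t^2}{2}\|g_t\|_{x_t}^2$. For the quadratic term I use $\|a+b\|^2\leqslant 2\|a\|^2+2\|b\|^2$ as before:
\[
\|d_t\|_{x_t}^2\leqslant 2\eta_t^2\|g_t\|_{x_t}^2+\tfrac{L_\Phi^2\eta_t^4}{2}\|g_t\|_{x_t}^4 .
\]
Multiplying by $L_f/2$ produces $\eta_t^2L_f\|g_t\|^2+\tfrac{L_fL_\Phi^2\eta_t^4}{4}\|g_t\|^4$. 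Combining these gives a pathwise inequality:
\[
f(x_{t+1})-f(x_t)\leqslant -\eta_t\langle\nabla f(x_t),g_t\rangle_{x_t}+\Big(\tfrac{L_\Phi G\eta_t^2}{2}+\eta_t^2L_f\Big)\|g_t\|_{x_t}^2+\tfrac{L_fL_\Phi^2\eta_t^4}{4}\|g_t\|_{x_t}^4 .
\]

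Finally I take the conditional expectation $\mathbb{E}_t$. Since $x_t$, and hence $\nabla f(x_t)$ and the tangent space $\mathcal T_{x_t}\mathcal M$, are measurable with respect to $x_1,\dots,x_t$, and $\mathbb{E}[g_t\mid x_t]=\nabla f(x_t)$, linearity of the inner product gives $\mathbb{E}_t\langle\nabla f(x_t),g_t\rangle_{x_t}=\|\nabla f(x_t)\|_{x_t}^2$. This produces the leading term $-\eta_t\|\nabla f(x_t)\|_{x_t}^2$; the remaining terms become the stated conditional moments.

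The one point needing care is justifying that the conditional expectation given the full history reduces to the one given $x_t$. This holds because $\xi_t$ is independent of the past. It is also where it matters that the bias is removed before the Cauchy--Schwarz step: the deterministic lemma bounded everything by $\|\nabla f\|^2$, but here the error terms must stay in terms of $g_t$. Everything else is routine.
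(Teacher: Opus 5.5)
Your proposal is correct and follows the paper's proof essentially step for step. It uses the same displacement $d_t$, the same split of $\langle \nabla f(x_t), d_t\rangle$ controlled by Cauchy--Schwarz, $G$ and Lemma~\ref{lemma1:retraction}, and the same $2\|a\|^2+2\|b\|^2$ bound on $\|d_t\|_{x_t}^2$. The differences are cosmetic: you derive a pathwise inequality and apply $\mathbb{E}_t$ at the end (the paper conditions from the first line), and you state explicitly that $\|\nabla f(x_t,\xi_t)\|_{x_t}\leqslant G$ and $\eta_t G\leqslant r$ are needed to invoke Lemma~\ref{lemma1:retraction}, which the paper uses without comment.
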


\begin{proof}
    Let $ d_{t} =-\mathrm{Exp}^{}_{x_{t}}(\hat{R}^t_{x_{t}}( -\eta _{t} \nabla f( x_{t},\xi_t)))$, by Assumption \ref{asp:smooth},
    \begin{align}
        \mathbb{E}_{t} f( x_{t+1}) -f( x_{t}) & \leqslant -\mathbb{E}_{t}\langle \nabla f( x_{t}) ,d_{t} \rangle _{x_{t}} +\frac{L_{f}}{2}\mathbb{E}_{t}\Vert d_{t}\Vert _{x_{t}}^{2} \notag\\
         & =-\eta _{t}\Vert \nabla f( x_{t})\Vert _{x_{t}}^{2} -\mathbb{E}_{t} \langle \nabla f( x_{t}) ,d_{t} -\eta _{t} \nabla f( x_{t} ,\xi _{t}) \rangle _{x_{t}} +\frac{L_{f}}{2}\mathbb{E}_{t}\Vert d_{t}\Vert _{x_{t}}^{2} \notag\\
         & \leqslant -\eta _{t}\Vert \nabla f( x_{t})\Vert _{x_t}^{2} +\frac{L_{\Phi } G\eta _{t}^{2}}{2}\mathbb{E}_{t}\Vert \nabla f( x_{t} ,\xi _{t})\Vert _{x_{t}}^{2} +\frac{L_{f}}{2}\mathbb{E}_{t}\Vert d_{t}\Vert _{x_{t}}^{2}, \label{eq:stoc_d}
    \end{align}
    where in the last inequality we use 
    \begin{align*}
        \langle \nabla f( x_{t}) ,d_{t} -\eta _{t} \nabla f( x_{t} ,\xi _{t}) \rangle _{x_{t}} & \leqslant \Vert \nabla f( x_{t})\Vert _{x_{t}}\Vert d_{t} -\eta _{t} \nabla f( x_{t} ,\xi _{t})\Vert _{x_{t}}\\
         & \leqslant G\Vert d_{t} -\eta _{t} \nabla f( x_{t} ,\xi _{t})\Vert _{x_{t}} \\
         &\leqslant \frac{GL_{\Phi } \eta_t^2}{2}\Vert \nabla f( x_{t} ,\xi _{t})\Vert _{x_{t}}^{2} ,
        \end{align*}
     and Lemma \ref{lemma1:retraction}, 
        \begin{align*}
        \Vert d_{t}\Vert _{x_{t}}^{2} & =\Vert d_{t} -\eta _{t} \nabla f( x_{t} ,\xi _{t}) +\eta _{t} \nabla f( x_{t} ,\xi _{t})\Vert _{x_{t}}^{2}\\
         & \leqslant 2\Vert d_{t} -\eta _{t} \nabla f( x_{t} ,\xi _{t})\Vert _{x_{t}}^{2} +2\Vert \eta _{t} \nabla f( x_{t} ,\xi _{t})\Vert _{x_{t}}^{2}\\
         & \leqslant \frac{L_{\Phi }^{2} \eta _{t}^{4}}{2}\Vert \nabla f( x_{t} ,\xi _{t})\Vert _{x_{t}}^{4} +2\eta _{t}^{2}\Vert \nabla f( x_{t} ,\xi _{t})\Vert _{x_{t}}^{2}.
        \end{align*}
    Together with (\ref{eq:stoc_d}), we get 
    \begin{align*}
         \mathbb{E}_{t}f( x_{t+1}) -f( x_{t}) \leqslant -\eta _{t}\Vert \nabla f( x_{t})\Vert _{x_{t}}^{2} +\left(\frac{L_{\Phi } G\eta _{t}^{2}}{2} +\eta _{t}^{2} L_{f}\right)\mathbb{E}_{t}\Vert \nabla f( x_{t} ,\xi _{t})\Vert _{x_{t}}^{2} \\
         \quad\quad\quad\quad\quad+\frac{L_{f} L_{\Phi }^{2} \eta _{t}^{4}}{4}\mathbb{E}_{t}\Vert \nabla f( x_{t} ,\xi _{t})\Vert _{x_{t}}^{4},
    \end{align*} which concludes the proof.
\end{proof} 

Now we are ready to prove Theorem \ref{thm:SRMD}.
 
\begin{proof}[Proof of Theorem \ref{thm:SRMD}]
    Let $\displaystyle C_{sd} =\left( L_{\Phi } G+2L_{f} +2L_{f} L_{\Phi }^{2}\right)$, if $ \eta _{t} = \eta < \min\{1/C_{sd},2/G, r/G\} $, we have 
    \begin{align*}
        \mathbb{E}_t f( x_{t+1}) -f( x_{t}) & \leqslant -\eta\Vert \nabla f( x_{t})\Vert _{x_{t}}^{2} +\left(\frac{L_{\Phi } G\eta^{2}}{2} +\eta^{2} L_{f}\right)\mathbb{E}_t\Vert \nabla f( x_{t} ,\xi _{t})\Vert _{x_{t}}^{2} \\
        & \quad\quad\quad\quad\quad\quad\quad\quad\quad+\frac{L_{f} L_{\Phi }^{2} \eta^{4}}{4}\mathbb{E}_t\Vert \nabla f( x_{t} ,\xi _{t})\Vert _{x_{t}}^{4}\\
         & \leqslant -\eta\Vert \nabla f( x_{t})\Vert _{x_{t}}^{2} + C_{sd} \eta^{2}\mathbb{E}_t\Vert \nabla f( x_{t} ,\xi _{t})\Vert _{x_{t}}^{2},
    \end{align*}
    where we use $ \eta ^{4}\Vert \nabla f( x_{t},\xi_{t})\Vert _{x_{t}}^{4} \leqslant 4\eta ^{2}\Vert \nabla f( x_{t}, \xi_{t})\Vert _{x_{t}}^{2}$ since by assumption, $\Vert \nabla f( x_{t}, \xi_{t})\Vert _{x_{t}}^{2} \leqslant G$. 
    For the second term, we have 
\begin{align*}
\mathbb{E}_{t}\Vert \nabla f( x_{t} ,\xi _{t})\Vert _{x_{t}}^{2} & =\mathbb{E}_{t}\Vert \nabla f( x_{t} ,\xi _{t}) -\nabla f( x_{t}) +\nabla f( x_{t})\Vert _{x_{t}}^{2}\\
 & \leqslant 2\mathbb{E}_{t}\Vert \nabla f( x_{t} ,\xi _{t}) -\nabla f( x_{t})\Vert _{x_{t}}^{2} + 2\Vert \nabla f( x_{t})\Vert _{x_{t}}^{2}\\
 & \leqslant 2\sigma ^{2} +2\Vert \nabla f( x_{t})\Vert _{x_{t}}^{2} .
\end{align*}
Thus, we have 
$$ \mathbb{E}_tf( x_{t+1}) -f( x_{t}) \leqslant -\left( \eta -C_{sd} \eta^{2}\right)\Vert \nabla f( x_{t})\Vert _{x_{t}}^{2} +C_{sd} \eta ^{2} \sigma ^{2}. $$ 
Let $ F_t = \mathbb{E}f(x_{t}) - f(x^*) - {t}C_{sd} \eta ^{2} \sigma ^{2} $, then 
$$ F_{t+1} - F_t \leqslant -\left( \eta -C_{sd} \eta^{2}\right) \mathbb{E}\Vert \nabla f( x_{t})\Vert _{x_{t}}^{2}. $$
Using the same argument as the proof of Theorem \ref{thm:rmd}, we will prove the theorem in different cases. 

\textbf{Nonconvex Case:} Summing up over $t$ and rearranging leads to
    \begin{align*}
 \frac{1}{T}\sum _{t=1}^{T}\mathbb{E} \| \nabla f(x_{t} )\| _{x_{t}}^{2} 
& \leqslant \frac{F_{1} -F_{T+1}}{\left( \eta -C_{sd} \eta ^{2}\right) T} \\
& \leqslant \frac{f( x_{1}) -f( x_{*})}{\left( \eta -C_{sd} \eta ^{2}\right) T} +\frac{C_{sd} \eta \sigma ^{2}}{( 1-C_{sd} \eta )}.
\end{align*}

\textbf{Geodesically convex case:} Let $\displaystyle A_{t} =t^{2} ,\ a_{t} =A_{t} -A_{t-1} =2t+1$. We define $\displaystyle E_{t} =A_{t}\left(\mathbb{E} f( x_{t}) -f\left( x^{*}\right)\right)$. Thus, 
\begin{align*}
E_{t+1} -E_{t} & =A_{t+1}\mathbb{E}[ f( x_{t+1}) -f( x_{t})] +a_{t}\mathbb{E}\left[ f( x_{t}) -f\left( x^{*}\right)\right] .
\end{align*}
By Lemma \ref{lemma3:stoc_one_step_progress} and convexity, we have 
\begin{align*}
E_{t+1} -E_{t} & \leqslant -A_{t+1}\left( \eta -C_{sd} \eta ^{2}\right)\mathbb{E} \| \nabla f( x_{t}) \|_{x_t}^{2} -a_{t} \langle \nabla f(x_{t} ),\mathrm{Exp}^{-1}_{x_{t}}\left( x^{*}\right) \rangle _{x_{t}} +A_{t+1} C_{sd} \eta ^{2} \sigma ^{2}\\
 & \leqslant \frac{a_{t}^{2}}{4A_{t+1}\left( \eta -C_{sd} \eta ^{2}\right)}\mathbb{E} \| \nabla f( x_{t}) \|_{x_t}^{2} +A_{t+1} C_{sd} \eta ^{2} \sigma ^{2}\\
 & \leqslant \frac{a_{t}^{2}\text{diam}( A)}{4A_{t+1}\left( \eta -C_{sd} \eta ^{2}\right)} +A_{t+1} C_{sd} \eta ^{2} \sigma ^{2}
\end{align*}where we use a similar argument as the proof of Theorem \ref{thm:rmd}. Therefore, 
\begin{align*}
E_{T} & \leqslant E_{0} +\frac{\text{diam}( A)}{4\left( \eta -C_{sd} \eta ^{2}\right)}\sum _{t=0}^{T-1}\frac{( 2t+1)^{2}}{( t+1)^{2}} +\sum _{t=0}^{T-1}( t+1)^{2} C_{sd} \eta ^{2} \sigma ^{2}\\
 & \leqslant O\left( E_{0} +\frac{\text{diam}( A) T}{\eta -C_{sd} \eta ^{2}} +C_{sd} \eta ^{2} \sigma T^{3}\right) .
\end{align*}
Note that $\displaystyle E_{T} =T^{2}\left(\mathbb{E} f( x_{T}) -f\left( x^{*}\right)\right)$, we get 
\begin{align*}
\mathbb{E} f( x_{T}) -f\left( x^{*}\right) & \leqslant O\left(\frac{1}{\eta T} +\eta ^{2} \sigma T\right).
\end{align*}
We thus finish the proof of Theorem \ref{thm:SRMD}.
\end{proof}

\section{Numerical Experiments}\label{sec:num}
In this section, we do some experiments on Stiefel manifold optimization to verify our algorithm. We compare our algorithm with the classic algorithm \cite{wen2013feasible}, which is a special case of Algorithm \ref{algo:scd}. Since OptStiefel adapts non-monotonic line search strategy \cite{zhangNonmonotoneLineSearch2004}, which can greatly improve the performance of the algorithm, we adopt the same line search technique \cite{zhangNonmonotoneLineSearch2004} to Algorithm \ref{algo:scd}. In \cite{wen2013feasible}, the authors also use the Barzilai-Borwein (BB) step size heuristic to accelerate their algorithm. However, it is not clear how to use BB step size in the stochastic setting. To highlight the improvement made by randomization, we do not use BB step size in our experiment. We set the stop criterion to be 
\begin{equation*}
  \| \nabla_\mathcal{M}f(x) \| \leqslant 10^{-5} \ \text{or} \quad \|x_t - x_{t-1}\|\leqslant 10^{-5} \ \text{or}\quad  \|f(x_t) - f(x_{t-1})\| \leqslant 10^{-8}.
\end{equation*}
All the experiments are tested on a machine with Intel(R) Core(TM) i7-9750H CPU @ 2.60GHz   2.59 GHz and 16.0GB RAM\footnote{The code and data set are available at https://github.com/JiyuanTan/RMD/tree/main.}.

\subsection{Linear Eigenvalue Problem}
Given a symmetric matrix $A$, the linear eigenvalue problem is 
\begin{align*}
    \max_{X\in\mathrm{St}(n,p)} \text{Tr}(X^\T A X) = \sum_{k=1}^p \lambda_k,
\end{align*}
where $\lambda_1\geq\lambda_2\geq\cdots\geq\lambda_p$ are the $p$th largest eigenvalues of $A$.  In the experiments, we generate a random matrix $N$ whose entries are i.i.d. standard normal random variables and set $A=N^\top N$.
The test results are reported in Table~\ref{tab:eig}. We observe that while both algorithms produce solutions of similar quality, SCGD achieves a lower runtime than CGD. One reason is that SCGD only needs to solve several smaller linear systems rather than a large one in each iteration.

\begin{table}[!htb]
    \centering
    \begin{tabular}{|c|c|c|c|c|}\hline
        Algorithm & $n $ &  1000& 2000& 5000 \\ \hline 
        \multirow{2}{*}{CGD} & Time & 1.41&7.10 & 64.67\\\cline{2-5}
        &Error & 1.20e-06	& 9.28e-07 &	5.88e-06 \\\hline
        \multirow{2}{*}{SCGD} & Time &0.72&	4.56&	38.10 \\\cline{2-5}
        &Error &  7.08e-08 &	9.54e-07&	8.09e-06\\\hline
    \end{tabular}
    \caption{Results of the two algorithms. When $ K =1 $ the step size is $ 10^{-3} $. When $ K > 1 $ the step size is $ 10^{-2} $. We take $p = 10$. The number of block $K$ in Algorithm \ref{algo:scd} is chosen to be $ \lfloor n/300 \rfloor $. The results are the average of 5 runs. }
    \label{tab:eig}
\end{table}

\subsection{Orthogonal Procrustes Problem}
The orthogonal procrustes problem \cite{gower2004procrustes} is 
\begin{align*}
    \min_{X\in\mathrm{St}(n,p)} & \|AX-B\|_F^2.
\end{align*}
In the experiments, we generate $A=(A_{ij})$ with $A_{ij}\overset{\text{i.i.d.}}{\sim}\mathrm{Unif}(0,1)$, and construct $X^\star\in\mathrm{St}(n,p)$ by taking the $Q$ factor of the QR decomposition of a Gaussian random matrix.
We then set $B=AX^\star$. Since the objective is nonnegative and $X^\star$ attains value $0$, the optimal value is $0$.
The results are reported in Table~\ref{tab:procrustes}.

\begin{table}[!htb]
    \centering
    \begin{tabular}{|c|c|c|c|c|}\hline
        Algorithm & $n $ &  1000& 2000& 5000 \\ \hline 
        \multirow{2}{*}{CGD} & Time & 11.34&	37.68&	341.41\\\cline{2-5}
        &Error & 7.55e-2&	7.30e-2 &	1.13e-1 \\\hline
        \multirow{2}{*}{SCGD} & Time & 11.58&	38.49&	211.09\\\cline{2-5}
        &Error &  2.05e-2 &	4.41e-2&	9.76e-2\\\hline
    \end{tabular}
    \caption{Results of the two algorithms. When $ K =1 $ the step size is $ 10^{-3} $. When $ K > 1 $ the step size is $ 10^{-2} $. We take $p = 10$. The number of block $K$ in Algorithm \ref{algo:scd} is chosen to be $ \lfloor n/300 \rfloor $. }
    \label{tab:procrustes}
\end{table}

\section{Conclusion}\label{sec:con}

In this paper, we propose a new framework for mirror descent (MD) on Riemannian manifolds, based on the key observation that MD can be interpreted as an optimization method under a suitable reparameterization.  Under mild assumptions, we obtain the first non-asymptotic convergence result of Riemannian Mirror Descent in both the deterministic and stochastic settings. Furthermore, we introduce the Stochastic Curvilinear Gradient Descent algorithm for large-scale Stiefel manifold optimization under the RMD framework, which we believe is of independent interest. Similar techniques may be applied to address optimization problems in other settings. 

An interesting direction for future work is the explicit construction of problem-adapted reparameterizations. For instance, Li et al.~\cite{li2022implicit} showed that, under any commuting parametrization, the gradient flow is equivalent to the continuous-time mirror flow. In our general Riemannian setting, where no additional global structure is imposed, the resulting reparameterizations are inherently local. It would be of interest to investigate whether stronger geometric assumptions, such as those satisfied by Hessian manifolds, permit global reparameterizations and lead to sharper guarantees and improved algorithms.

%Further investigation is needed in this direction to identify and develop efficient reparameterization techniques for various optimization scenarios. 

\bibliographystyle{plain}
\bibliography{ref.bib}

@article{raskutti2015information,
  title={The information geometry of mirror descent},
  author={Raskutti, Garvesh and Mukherjee, Sayan},
  journal={IEEE Transactions on Information Theory},
  volume={61},
  number={3},
  pages={1451--1457},
  year={2015},
  publisher={IEEE}
}

@inproceedings{gunasekar2021mirrorless,
  title={Mirrorless mirror descent: A natural derivation of mirror descent},
  author={Gunasekar, Suriya and Woodworth, Blake and Srebro, Nathan},
  booktitle={International Conference on Artificial Intelligence and Statistics},
  pages={2305--2313},
  year={2021},
  organization={PMLR}
}

@article{lei2020convergence,
  title={Convergence of online mirror descent},
  author={Lei, Yunwen and Zhou, Ding-Xuan},
  journal={Applied and Computational Harmonic Analysis},
  volume={48},
  number={1},
  pages={343--373},
  year={2020},
  publisher={Elsevier}
}

@article{vandereycken2013low,
  title={Low-rank matrix completion by {R}iemannian optimization},
  author={Vandereycken, Bart},
  journal={SIAM Journal on Optimization},
  volume={23},
  number={2},
  pages={1214--1236},
  year={2013},
  publisher={SIAM}
}

@article{sato2019riemannian,
  title={Riemannian stochastic variance reduced gradient algorithm with retraction and vector transport},
  author={Sato, Hiroyuki and Kasai, Hiroyuki and Mishra, Bamdev},
  journal={SIAM Journal on Optimization},
  volume={29},
  number={2},
  pages={1444--1472},
  year={2019},
  publisher={SIAM}
}

@article{zhang2016riemannian,
  title={Riemannian {SVRG}: Fast stochastic optimization on {R}iemannian manifolds},
  author={Zhang, Hongyi and J Reddi, Sashank and Sra, Suvrit},
  journal={Advances in Neural Information Processing Systems},
  pages={4592--4600},
  volume={29},
  year={2016}
}

@inproceedings{alimisis2021momentum,
  title={Momentum improves optimization on {R}iemannian manifolds},
  author={Alimisis, Foivos and Orvieto, Antonio and Becigneul, Gary and Lucchi, Aurelien},
  booktitle={International Conference on Artificial Intelligence and Statistics},
  pages={1351--1359},
  year={2021},
  organization={PMLR}
}

@article{srinivasan2022sufficient,
  title={Sufficient conditions for non-asymptotic convergence of {R}iemannian optimisation methods},
  author={Srinivasan, Vishwak and Wilson, Ashia},
  journal={14th Annual Workshop on Optimization for Machine Learning},
  year={2022}
}

@inproceedings{tripuraneni2018averaging,
  title={Averaging stochastic gradient descent on {R}iemannian manifolds},
  author={Tripuraneni, Nilesh and Flammarion, Nicolas and Bach, Francis and Jordan, Michael I},
  booktitle={Conference On Learning Theory},
  pages={650--687},
  year={2018},
  organization={PMLR}
}

@inproceedings{zhang2016first,
  title={First-order methods for geodesically convex optimization},
  author={Zhang, Hongyi and Sra, Suvrit},
  booktitle={Conference on Learning Theory},
  pages={1617--1638},
  year={2016},
  organization={PMLR}
}

@article{bansal2018can,
  title={Can we gain more from orthogonality regularizations in training deep networks?},
  author={Bansal, Nitin and Chen, Xiaohan and Wang, Zhangyang},
  journal={Advances in Neural Information Processing Systems},
  pages={4261--4271},
  volume={31},
  year={2018}
}

@inproceedings{arjovsky2016unitary,
  title={Unitary evolution recurrent neural networks},
  author={Arjovsky, Martin and Shah, Amar and Bengio, Yoshua},
  booktitle={International Conference on Machine Learning},
  pages={1120--1128},
  year={2016},
  organization={PMLR}
}

@inproceedings{tan2014riemannian,
  title={Riemannian pursuit for big matrix recovery},
  author={Tan, Mingkui and Tsang, Ivor W and Wang, Li and Vandereycken, Bart and Pan, Sinno Jialin},
  booktitle={International Conference on Machine Learning},
  pages={1539--1547},
  year={2014},
  organization={PMLR}
}

@article{cherian2016riemannian,
  title={Riemannian dictionary learning and sparse coding for positive definite matrices},
  author={Cherian, Anoop and Sra, Suvrit},
  journal={IEEE Transactions on Neural Networks and Learning Systems},
  volume={28},
  number={12},
  pages={2859--2871},
  year={2016},
  publisher={IEEE}
}

@article{bento2017iteration,
  title={Iteration-complexity of gradient, subgradient and proximal point methods on {R}iemannian manifolds},
  author={Bento, Glaydston C and Ferreira, Orizon P and Melo, Jefferson G},
  journal={Journal of Optimization Theory and Applications},
  volume={173},
  number={2},
  pages={548--562},
  year={2017},
  publisher={Springer}
}

@article{wen2013feasible,
  title={A feasible method for optimization with orthogonality constraints},
  author={Wen, Zaiwen and Yin, Wotao},
  journal={Mathematical Programming},
  volume={142},
  number={1-2},
  pages={397--434},
  year={2013},
  publisher={Springer}
}

@inproceedings{duchi2010composite,
  title={Composite objective mirror descent.},
  author={Duchi, John C and Shalev-Shwartz, Shai and Singer, Yoram and Tewari, Ambuj},
  booktitle={Conference on Learning Theory},
  volume={10},
  pages={14--26},
  year={2010},
  organization={Citeseer}
}

@book{nemirovski1983problem,
  title={{Problem Complexity and Method Efficiency in Optimization}},
  author={Nemirovski, Arkadi Semenovi{\v{c}} and Yudin, David Borisovich},
  year={1983},
  publisher={Wiley-Interscience}
}

@article{beck2003mirror,
  title={Mirror descent and nonlinear projected subgradient methods for convex optimization},
  author={Beck, Amir and Teboulle, Marc},
  journal={Operations Research Letters},
  volume={31},
  number={3},
  pages={167--175},
  year={2003},
  publisher={Elsevier}
}

@article{amid2020reparameterizing,
  title={Reparameterizing mirror descent as gradient descent},
  author={Amid, Ehsan and Warmuth, Manfred KK},
  journal={Advances in Neural Information Processing Systems},
  volume={33},
  pages={8430--8439},
  year={2020}
}

@article{nemirovski2009robust,
  title={Robust stochastic approximation approach to stochastic programming},
  author={Nemirovski, Arkadi and Juditsky, Anatoli and Lan, Guanghui and Shapiro, Alexander},
  journal={SIAM Journal on Optimization},
  volume={19},
  number={4},
  pages={1574--1609},
  year={2009},
  publisher={SIAM}
}

@article{bregman1967relaxation,
  title={The relaxation method of finding the common point of convex sets and its application to the solution of problems in convex programming},
  author={Bregman, Lev M},
  journal={USSR Computational Mathematics and Mathematical Physics},
  volume={7},
  number={3},
  pages={200--217},
  year={1967},
  publisher={Elsevier}
}

@article{li2022implicit,
  title={Implicit bias of gradient descent on reparametrized models: On equivalence to mirror descent},
  author={Li, Zhiyuan and Wang, Tianhao and Lee, Jason D and Arora, Sanjeev},
  journal={Advances in Neural Information Processing Systems},
  volume={35},
  pages={34626--34640},
  year={2022}
}

@article{zhangNonmonotoneLineSearch2004,
  title = {A Nonmonotone Line Search Technique and Its Application to Unconstrained Optimization},
  author = {Zhang, Hongchao and Hager, William W.},
  year = {2004},
  month = jan,
  journal = {SIAM Journal on Optimization},
  volume = {14},
  number = {4},
  pages = {1043--1056},
  issn = {1052-6234, 1095-7189},
  doi = {10.1137/S1052623403428208},
  urldate = {2024-03-01},
  langid = {english}
}

@book{gower2004procrustes,
  title={Procrustes {P}roblems},
  author={Gower, John C and Dijksterhuis, Garmt B},
  volume={30},
  year={2004},
  publisher={OUP Oxford}
}

@inproceedings{yang2022policy,
  title={Policy optimization with stochastic mirror descent},
  author={Yang, Long and Zhang, Yu and Zheng, Gang and Zheng, Qian and Li, Pengfei and Huang, Jianhang and Pan, Gang},
  booktitle={Proceedings of the AAAI Conference on Artificial Intelligence},
  volume={36},
  pages={8823--8831},
  year={2022}
}

@article{zhan2023policy,
  title={Policy mirror descent for regularized reinforcement learning: A generalized framework with linear convergence},
  author={Zhan, Wenhao and Cen, Shicong and Huang, Baihe and Chen, Yuxin and Lee, Jason D and Chi, Yuejie},
  journal={SIAM Journal on Optimization},
  volume={33},
  number={2},
  pages={1061--1091},
  year={2023},
  publisher={SIAM}
}

@article{sun2022mirror,
  title={Mirror descent maximizes generalized margin and can be implemented efficiently},
  author={Sun, Haoyuan and Ahn, Kwangjun and Thrampoulidis, Christos and Azizan, Navid},
  journal={Advances in Neural Information Processing Systems},
  volume={35},
  pages={31089--31101},
  year={2022}
}

@article{soh2023mirror,
  title={Mirror descent of {H}opfield model},
  author={Soh, Hyungjoon and Kim, Dongyeob and Hwang, Juno and Jo, Junghyo},
  journal={Neural Computation},
  volume={35},
  number={9},
  pages={1529--1542},
  year={2023},
  publisher={MIT Press One Rogers Street, Cambridge, MA 02142-1209, USA journals-info~…}
}

@inproceedings{criscitiello2022negative,
  title={Negative curvature obstructs acceleration for strongly geodesically convex optimization, even with exact first-order oracles},
  author={Criscitiello, Christopher and Boumal, Nicolas},
  booktitle={Conference on Learning Theory},
  pages={496--542},
  year={2022},
  organization={PMLR}
}

@article{shu2025revisit,
  title={Revisit First-order Methods for Geodesically Convex Optimization},
  author={Yunlu Shu and Jiaxin Jiang and Lei Shi and Tianyu Wang},
  journal={arXiv preprint arXiv:2504.06814},
  year={2025}
}

\end{document}